\documentclass[11pt]{article}

\usepackage[total={6.5in, 9.5in}]{geometry}
\usepackage{xcolor}
\usepackage{amsmath}
\usepackage{amssymb}
\usepackage{graphicx}
\usepackage{amsthm}
\usepackage{hyperref}
\usepackage{natbib}

\usepackage[T1]{fontenc} \usepackage[utf8]{inputenc}

\usepackage[labelfont=bf]{caption}
\usepackage[format=hang]{subcaption}

\usepackage{tikz}
\usetikzlibrary{bayesnet}
\usepackage{xfrac}
\usepackage{cleveref}
\usepackage{bm}
\newtheorem{assumption}{Assumption}
\newtheorem{proposition}{Proposition}

\newtheorem{definition}{Definition}

\newcommand{\f}{\mathrm{f}}
\newcommand{\F}{\mathrm{F}}
\newcommand{\g}{\mathrm{g}}

\newcommand{\pr}{\mathrm{p}}

\newcommand{\Prob}{\mathrm{P}}
\newcommand{\pa}{\mathrm{pa}}
\newcommand{\rmdo}{\mathrm{do}}
\newcommand{\di}{\mathrm{d}}
\newcommand{\xb}{\mathbf{x}}

\newcommand{\yb}{\mathbf{y}}
\newcommand{\ab}{\mathbf{a}}
\newcommand{\zb}{\mathbf{z}}
\newcommand{\ub}{\mathbf{u}}
\newcommand{\epsb}{\bm{\epsilon}}
\newcommand{\Xc}{\mathcal{X}}
\newcommand{\Yc}{\mathcal{Y}}
\newcommand{\Gb}{\mathbb{G}}
\newcommand{\Ib}{\mathbb{I}}
\newcommand{\Nb}{\mathbb{N}}
\newcommand{\Sb}{\mathbb{S}}
\newcommand{\Tb}{\mathbb{T}}
\newcommand{\Rb}{\mathbb{R}}
\newcommand{\Zb}{\mathbb{Z}}
\newcommand{\Eb}{\mathbb{E}}

\newcommand{\Ac}{\mathcal{A}}
\newcommand{\Ec}{\mathcal{E}}

\newcommand{\Nc}{\mathcal{N}}
\newcommand{\Mc}{\mathcal{M}}
\newcommand{\Gc}{\mathcal{G}}
\newcommand{\Vc}{\mathcal{V}}
\newcommand{\Cc}{\mathcal{C}}

\newcommand{\Dc}{\mathcal{D}}
\newcommand{\Pc}{\mathcal{P}}

\newcommand{\Zc}{\mathcal{Z}}
\newcommand{\GPc}{\textsc{gp}}\newcommand{\ID}{\mathrm{Id}}
\newcommand{\s}{;}
\newcommand{\obs}{\mathrm{obs}}
\newcommand{\scm}{\mathrm{scm}}

\newcommand{\Cov}{\mathrm{Cov}}
\newcommand{\Ob}{\mathbb{O}}
\newcommand{\Zero}{\mathbf{0}}
\newcommand{\Yb}{\mathbf{Y}}
\newcommand{\wt}{\textsc{ref}}

\newcommand{\Oarray}{(\Nb \cup \diamond)^2}
\newcommand{\Sequence}{\textsc{Sequence Data}}
\newcommand{\Spatial}{\textsc{Spatial Data}}
\newcommand{\Array}{\textsc{Array Data}}
\newcommand{\ball}{{\mathcal{N}(\omega)}}

\usepackage{algpseudocode}
\usepackage{algorithm}

\usepackage{enumitem}

\usepackage{tikz}
\usetikzlibrary{arrows.meta,calc}
\usepackage{amsmath,bm}
\usepackage{graphicx}
\definecolor{nodefill}{HTML}{9BA2EA}
\definecolor{nodering}{HTML}{B9BEFB}
\definecolor{uring}{HTML}{D2D2D2}
\definecolor{arrowgray}{HTML}{C7C7C7}
\definecolor{callgray}{HTML}{5F5F5F}
\definecolor{hifill}{HTML}{7E87E6}
\definecolor{histroke}{HTML}{8A8A8A}
\definecolor{med}{HTML}{A9B0F2}
\definecolor{light}{HTML}{C3C8F5}
\definecolor{gridline}{HTML}{ECE7DA}
\definecolor{axisgray}{HTML}{7A7A7A}
\tikzset{
  var/.style={circle, draw=nodering, line width=1.1pt, fill=nodefill,
              minimum size=1.05cm, inner sep=1pt, font=\large},
  exo/.style={var, fill=white, draw=uring, line width=0.9pt, minimum size=0.95cm},
  cause/.style={arrowgray, line width=0.9pt, -{Stealth[length=7pt, width=6pt]}},
  callout/.style={callgray, line width=0.55pt},
  axislab/.style={axisgray, font=\large\itshape},
}
 
\begin{document}

\title{Geometric Causal Models}

\author{Eli N. Weinstein\footnote{Department of Chemistry, Technical University of Denmark, Kgs. Lyngby, DK. \url{enawe@dtu.dk}}, David M. Blei\footnote{Departments of Statistics and Computer Science, Columbia University, New York, NY, USA. \url{david.blei@columbia.edu}}}

\date{\today}

\maketitle

\begin{abstract}
Scientists often seek to draw causal inferences from structured data that is not independently and identically distributed, such as spatial data, network data, or molecular data. 
We develop \textit{geometric causal models} (GCMs), a framework for causal inference from dependent data that exploits underlying symmetries of the data generating process. 
For example, in spatial data, we consider processes that are symmetric under translations, or in graph data, symmetric under permutations of the nodes.
We show how symmetries, formalized with group theory, can enable causal identification and estimation.
We deploy ergodic theory for amenable groups to establish identification, and combine geometric deep learning with scalable Bayesian inference for estimation. We recover i.i.d. causal models and do-calculus when the data is a sequence and the symmetry is permutation equivariance, and find novel types of causal models when we use alternate structures and symmetries.
As an example, we construct a causal model that satisfies the symmetries of DNA.
This GCM enables new estimators for the effects of genetic variation, combining deep functional genomics models to describe outcomes and DNA language models to describe propensities. We illustrate on semisynthetic data.

\end{abstract}

\section{Introduction}

In classical causal inference, each unit exists in isolation: its treatment and outcome is independent of all other units.
But consider a dataset about trees and temperature measured at different locations in a city. 
We expect spatial dependence: planting trees will affect temperature in the surrounding area.
Similarly in a community: talking to one person can affect the decisions of their friends.
Or in a genome: a mutation at one locus affects gene expression nearby along the chromosome.

A central challenge in causal inference is to develop methods to account for such dependence among units. Many methods have been proposed, including techniques tailored to spatial, temporal or network data, as well as techniques for handling less structured forms of interference and dependence~\citep{Giffin2021-qd,Giffin2023-qm,Papadogeorgou2019-ij,Papadogeorgou2023-mp,Gilbert2024-oc,Wang2025-in,Rischard2018-nw,Peters2013-to,Bojinov2019-nw,Christiansen2022-ip,Papadogeorgou2022-pp,Ogburn2022-iv,Sridhar2022-tu,Cristali2022-cv,Hudgens2008-hn,Guo2025-rd,Agarwal2021-ds,Savje2021-zy,Liang2023-qz,Roysland2024-mq,Didelez2008-fk}.

We study non-i.i.d. causal inference through the unified lens of symmetry.
Our basic idea is that even when units are not independent, their underlying causal mechanisms may still be \emph{symmetric}, in the sense that they are preserved under certain transformations. 
For instance, while temperature might not be i.i.d. across locations in a city, the same physical forces are at play everywhere. So the causal mechanisms driving urban heat are translation invariant, symmetric under a shift in location.

We formalize the problem using the tools of group theory.
We propose \textit{geometric causal models} (GCMs), causal models with mechanisms that are symmetric under a group of transformations.
For spatial data, this can be the group of translations in space; for network data, this can be the group of permutations of the network's nodes.
We show that as long as causal mechanisms obey some symmetry, causal inference is possible, even in the face of complex dependence among units.

To create GCMs, we generalize the axioms of structural causal modeling~\citep{Bareinboim2021-ro,Pearl2009-fh}.
We introduce causal mechanisms that are \textit{equivariant} to a symmetry group.
Then we analyze causal identification in these GCMs, to determine when and how we can estimate causal mechanisms. 
We consider arbitrary causal graphs and nonparametric causal mechanisms.
Relying on tools from ergodic theory, we show that causal inference is possible even with data from a single city, social network, or genome.
We also derive causal estimators. 
We show how statistical and machine learning methods for handling structured dependent data, such as those employed in Bayesian nonparametrics and in geometric deep learning, can be adapted to causal estimation.

We demonstrate this idea with an application to genomics, introducing a causal model to describe how genetic variation affects biological activity along the genome.
The GCM provides a causal understanding of large scale scientific deep learning systems, specifically functional genomics models such as AlphaGenome, and generative models of genomes such as Evo2 \citep{Avsec2026-tq,Brixi2026-sp}.
The GCM also lets us derive alternative estimators for variant effects, which we demonstrate on semisynthetic human functional genomics data.

GCMs unify many causal methods: i.i.d. models, spatial models, network models and more all emerge from the same framework, each corresponding to a different choice of symmetry group.
More importantly, by generalizing causal inference to new symmetry groups, GCMs enable causal inference with new types of structured data.  
They especially expand the reach of causal inference in the natural sciences, beyond the classical territory of biomedical and social science \citep{Imbens2015-bz}. GCMs equip scientists with a toolbox for formalizing causal questions for non-i.i.d. data; correcting for different sources of confounding and error using different causal graphs; and leveraging flexible machine learning methods for estimation.

\subsection{Related work}

\paragraph{Causal inference with non-i.i.d. data.} GCMs connect to previous methods for causal inference with dependent and non-i.i.d. data, including methods for 
spatial data~\citep[e.g.][]{Giffin2021-qd,Giffin2023-qm,Papadogeorgou2019-ij,Papadogeorgou2023-mp,Gilbert2024-oc,Wang2025-in,Rischard2018-nw}, 
time series data~\citep[e.g.][]{Peters2013-to,Bojinov2019-nw,Liang2023-qz}, spatiotemporal data~\citep[e.g.][]{Christiansen2022-ip,Papadogeorgou2022-pp}, 
network data~\citep[e.g.][]{Ogburn2022-iv,Sridhar2022-tu,Cristali2022-cv,Li2022-iw},
array/panel data~\citep[e.g.][]{Wooldridge2010-dv,Agarwal2021-ds,Athey2021-zu}, and
clustered/hierarchical data~\citep[e.g.][]{Hudgens2008-hn,Liu2014-fl,Peters2016-qg,Guo2025-rd,Feller2015-eb,Weinstein2026-jl}.

GCMs offer a unified perspective on these different models, through the lens of group theory: spatiotemporal models are causal models that are symmetric under shifts in space or time; network models are symmetric under permutations of the nodes; array models are symmetric under separate permutations of the rows and columns; cluster models are symmetric under permutations of the clusters, and under permutations of the units within each cluster.
Though many previous methods focus on a specific causal graph, GCMs cover a wide diversity~\citep[][]{Christiansen2022-ip,Weinstein2026-jl,Guo2025-rd}.

\citet{Dance2024-mi} also apply group theory to causal inference, but in the setting of i.i.d. models, and for the purpose of efficient estimation.
\citet{Besserve2018-rp} use group theory to infer causal direction from i.i.d. data, evaluating independence of cause and mechanism.

\paragraph{Symmetry in statistics and machine learning.} Symmetry, formalized with group theory, has a long history in statistics~\citep{Eaton2021-ns,Diaconis1988-gy}. We build most directly on \citet{Orbanz2015-fm}, who unify different nonparametric Bayesian models of structured data through the lens of invariant and ergodic distributions. \citet{Austern2018-dy} show how these symmetries enable statistical inference, deriving central limit theorems for invariant distributions.
We employ these ideas to show how symmetry enables causal inference.
This relates to other recent efforts to incorporate symmetry into inference~\citep{Dobriban2025-io,Chiu2024-og}.

We also build on geometric deep learning, which constructs flexible models to make predictions about structured data~\citep{Bronstein2021-gz,Bruna2014-ux}.
Geometric deep learning provides tools to build neural network architectures that respect different group symmetries~\citep{Cohen2016-oy,Kondor2018-pz}.
This approach has seen wide and growing application especially in the natural sciences, as many physical and chemical laws and theories respect some nontrivial symmetry~\citep{Otto2023-dk,Bogatskiy2022-oj,Watson2023-sp}.
Geometric deep learning is often motivated by the aim of more efficient prediction;
our results go further, showing symmetry enables causal identification. 

GCMs are probabilistic, and so interface especially with probabilistic approaches to geometric deep learning~\citep{Bloem-Reddy2020-gm}.
In particular, a powerful strategy for constructing invariant generative models is to compose i.i.d. noise with equivariant functions~\citep{Rezende2019-yw,Kohler2020-sx,Klein2023-iq,Midgley2023-mg}.
GCMs apply this construction to structural causal modeling.

\paragraph{Note on nomenclature.} We use the term \textit{invariant} in the group theoretic sense, to describe an object that is unchanged under the action of a group.
This is distinct from common usage in causal machine learning, where it describes a distribution or model that is unchanged across different environments~\citep{Peters2016-qg,Arjovsky2019-bx}.

\section{A first look at geometric causal models}

\begin{figure}[t]
\centering
\begin{subfigure}[t]{0.3\textwidth}
\centering
\scalebox{0.6}{\begin{tikzpicture}
\node[exo] (u) at (0,0)      {$u_\omega$};
  \node[var] (x) at (2.8,0)    {$x_\omega$};
  \node[var] (a) at (0,-2.2)   {$a_\omega$};
  \node[var] (y) at (2.8,-2.2) {$y_\omega$};
  \draw[cause] (u) -- (x);
  \draw[cause] (u) -- (a);
  \draw[cause] (x) -- (y);
  \draw[cause] (a) -- (y);
\foreach \i in {-2,-1,1,2}
    \draw[rounded corners=4.7pt, line width=1pt, draw=nodering, fill=med]
      ({1.4+\i*1.45-0.485},-3.9) rectangle ({1.4+\i*1.45+0.485},-4.87);
  \draw[rounded corners=4.7pt, line width=0.55pt, draw=histroke, fill=hifill]
      (0.915,-3.9) rectangle (1.885,-4.87);
\coordinate (AL) at (1.08,-4.065);
  \coordinate (AR) at (1.72,-4.065);
  \draw[callout] (a) -- ($(AL)!0.165cm!(a.center)$);
  \draw[callout] (y) -- ($(AR)!0.165cm!(y.center)$);
  \node at (1.4,-5.22) {\large$\bm{\omega}$};
  \node[axislab, anchor=east] at (4.88,-5.35) {units};
\end{tikzpicture}}
\caption*{(a) \Sequence}
\end{subfigure}
\begin{subfigure}[t]{0.3\textwidth}
\centering
\scalebox{0.45}{\begin{tikzpicture}
\node[exo] (u) at (0,0)      {$u_\omega$};
  \node[var] (x) at (3,0)      {$x_\omega$};
  \node[var] (a) at (0,-2.3)   {$a_\omega$};
  \node[var] (y) at (3,-2.3)   {$y_\omega$};
  \draw[cause] (u) -- (x);
  \draw[cause] (u) -- (a);
  \draw[cause] (x) -- (y);
  \draw[cause] (a) -- (y);
\fill[med] (-2.3,-3.35) -- (5.3,-3.35) -- (3.593,-6.55) -- (-4.007,-6.55) -- cycle;
\draw[fill=hifill, draw=histroke, line width=0.55pt, line join=round]
    (1.2,-4.55) -- (1.8,-4.55) -- (1.5867,-4.95) -- (0.9867,-4.95) -- cycle;
  \draw[callout] (a) -- (1.2,-4.55);
  \draw[callout] (y) -- (1.8,-4.55);
  \node at (1.328,-5.14) {\large$\bm{\omega}$};
  \node[axislab] at (-0.2,-7.15) {longitude};
\node[axislab, rotate=61.9] at (4.93,-5.21) {latitude};
\end{tikzpicture}}
\caption*{(c) \Spatial}
\end{subfigure}
\begin{subfigure}[t]{0.3\textwidth}
\centering
\scalebox{0.5}{\begin{tikzpicture}
\node[exo] (u) at (0,0)        {$u_\omega$};
  \node[var] (a) at (-1.9,-2.05) {$a_\omega$};
  \node[var] (x) at (0,-2.05)    {$x_\omega$};
  \node[var] (y) at (1.9,-2.05)  {$y_\omega$};
  \draw[cause] (u) -- (a);
  \draw[cause] (u) -- (y);
  \draw[cause] (a) -- (x);
  \draw[cause] (x) -- (y);
\foreach \i in {0,...,3}
    \draw[gridline, line width=0.4pt, fill=light]
      (-4.625,{-3.89-\i*0.69}) rectangle (-2.775,{-3.89-(\i+1)*0.69});
\foreach \j in {0,1,2}
    \draw[gridline, line width=0.4pt, fill=light]
      ({-2.775+\j*1.85},-3.2) rectangle ({-2.775+(\j+1)*1.85},-3.89);
\foreach \i in {0,...,3} \foreach \j in {0,1,2}
    \draw[gridline, line width=0.4pt, fill=med]
      ({-2.775+\j*1.85},{-3.89-\i*0.69}) rectangle ({-2.775+(\j+1)*1.85},{-3.89-(\i+1)*0.69});
\draw[fill=hifill, draw=histroke, line width=0.55pt]
    (-0.6,-4.025) rectangle (0.6,-4.445);
  \draw[callout] (a) -- (-0.6,-4.025);
  \draw[callout] (y) -- (0.6,-4.025);
  \node at (0,-4.75) {\large$\bm{\omega}$};
\draw[callgray, line width=0.7pt]
    (-3.7,-3.37) -- ++(0.18,-0.18) -- ++(-0.18,-0.18) -- ++(-0.18,0.18) -- cycle;
\node[axislab, rotate=90] at (3.2,-4.93) {rows};
  \node[axislab] at (-0.925,-7.15) {columns};
\end{tikzpicture}}
\caption*{(e) \Array}
\end{subfigure}
\begin{subfigure}[t]{0.3\textwidth}
\centering
\begin{tikzpicture}

\node[obs]                               (y) {$\yb$};
  \node[obs, left=1cm of y] (a) {$\ab$};
  \node[latent, above=.6cm of a] (u) {$\ub$};
  \node[obs, above=.6cm of y] (x) {$\xb$};

\edge {x,a} {y} ;
  \edge {u}{a,x};

\plate{in} {(y)(a)(x)} {$\Omega=\Nb,\Gb=\Sb$};

\end{tikzpicture}
\caption*{(b) \textsc{Sequence GCM}} \label{fig:seq_confounder}
\end{subfigure}
\begin{subfigure}[t]{0.3\textwidth}
\centering
\begin{tikzpicture}

\node[obs]                               (y) {$\yb$};
  \node[obs, left=1cm of y] (a) {$\ab$};
  \node[latent, above=.6cm of a] (u) {$\ub$};
  \node[obs, above=.6cm of y] (x) {$\xb$};

\edge {x,a} {y} ;
  \edge {u}{a,x};

\plate{in} {(y)(a)(x)(u)} {$\Omega=\Rb^2,\Gb=\Tb^2$};

\end{tikzpicture}
\caption*{(d) \textsc{Spatial GCM}} \label{fig:space_confounder}
\end{subfigure}
\begin{subfigure}[t]{0.3\textwidth}
\centering
\begin{tikzpicture}

\node[obs]                               (y) {$\yb$};
  \node[obs, left=1.5cm of y] (a) {$\ab$};
  \node[obs, left=.4cm of y] (x) {$\xb$};
  \node[latent, above=.5cm of x] (u) {$\ub$};

\edge {x,u} {y} ;
  \edge {u}{a};
  \edge{a}{x};

\plate{in} {(y)(a)(x)(u)} {$\Omega=\Oarray,\Gb=\Sb^2$};

\end{tikzpicture}
\caption*{(f) \textsc{Array GCM}} \label{fig:array_mediator}
\end{subfigure}
\caption{\textbf{Examples of geometric causal models} (a) Conventional i.i.d. causal models describe data from an unordered sequence of units, $\omega \in \{1, 2, \ldots\}$. (b) In GCM notation, we use structured variables $\xb$ that group together measurements from each unit, and specify the index set $\Omega = \Nb$ and the symmetry group $\Gb = \Sb$, permutations. (c) In spatial data, we measure variables at different points in the plane. (d) The GCM specifies the index $\Omega = \Rb^2$ and the translation group $\Tb^2$. (e) In array data, we measure data at different rows and columns. (f) The GCM specifies the index set $\Omega = \Oarray$, where entries $(\omega_1, \diamond)$ and $(\diamond, \omega_2)$ are the row- or column-specific variables. The group is separate permutations of the rows and columns, $\Gb = \Sb^2$.} \label{fig:examples}
\end{figure}

\paragraph{Three examples of geometric causal models.} Causal models describe the world in terms of variables and their impact on one another, mapped out in a graph.
A geometric causal model (GCM) describes \textit{structured} variables $\xb$ whose entries $x_\omega$ are indexed by elements of some set, $\omega \in \Omega$.
For example, these indices could refer to a position in space, or a row and column in an array.
GCMs assume the causal mechanisms underlying structured variables have symmetry: they remain the same even after transformations. The set of transformations they are stable to, $\Gb$, is called a \textit{symmetry group}.
Graphically, in a diagram of a GCM, we notate the index $\Omega$ and the group $\Gb$ in the corner of the plate (\Cref{fig:examples}).

\textit{Sequence data.} \Cref{fig:examples}b is a classical causal model, rewritten as a GCM. 
It describes variables collected from a sequence of units (\Cref{fig:examples}a). 
The index set $\Omega$ is the set of natural numbers, $\Nb = \{1, 2, 3, \ldots\}$. Each variable in the model is a sequence, e.g. $\xb = (x_\omega)_{\omega \in \Nb} = (x_1, x_2, \ldots)$.
There is an unobserved confounder $\ub$, a covariate $\xb$, a treatment $\ab$ and an outcome $\yb$, each a sequence with a value for every unit $\omega \in \Nb$.
The underlying causal mechanisms of the model are assumed to be symmetric with respect to permutations, i.e. the ordering of the units does not affect how the causal variables are generated. 
The group of permutations is denoted $\Sb$.

\textit{Spatial data.} \Cref{fig:examples}d is a GCM for spatial data. The causal graph is the same as in \Cref{fig:examples}b. But the variables are now spatial processes: the index set is the Euclidean plane $\Omega = \Rb^2$ (\Cref{fig:examples}c). Each $x_\omega$ is the value of $\xb$ at latitude $\omega_1$ and longitude $\omega_2$.
 For example, we may be interested in the impact of trees on urban temperature. $\ab$ could describe number of trees at different locations, and $\yb$ the average temperature. Covariates $\xb$ could include other observables about the environment, such as street traffic or building materials, while hidden confounders $\ub$ could account for local business or zoning.
 
There is interference across space: a park full of trees at $a_\omega$ can impact temperature a block away $y_{\omega'}$.
But we assume the causal mechanisms creating this interference, i.e. the underlying forces of physics and microclimate, are unchanged across locations.
So, if we observe trees in Brooklyn we can make predictions about trees in Manhattan.
This GCM is symmetric with respect to shifts in location. The group of translations of the Euclidean plane is denoted $\Tb^2$.

\textit{Array data.} \Cref{fig:examples}f is a GCM for array data. The index $\omega=(\omega_1, \omega_2) \in \Nb^2$ specifies the row $\omega_1$ and column $\omega_2$ (\Cref{fig:examples}e). 
Each $x_\omega$ is an entry in the array $\xb$.
For example, we might be interested in the impact of large language model (LLM) tutoring on student outcomes.
The treatment $a_{\omega_1 \omega_2}$ is whether student $\omega_1$ asks the LLM about topic $\omega_2$, the mediator $x_{\omega_1 \omega_2}$ is the explanation generated by the LLM, and the outcome $y_{\omega_1 \omega_2}$ is how well the student scores on a question about the topic. 

The causal model should account for individual students and topics.
For example, student's questions and outcomes may be confounded by their interests and beliefs, or a topic's inherent difficulty. 
We use $\omega_1 \diamond$ and $\diamond \omega_2$ to index row- and column-specific variables, student-specific or topic-specific.
The confounder $\ub$ includes student-specific values $u_{\omega_1 \diamond}$, topic-specific values $u_{\diamond \omega_2}$, and student-topic-specific values $u_{\omega_1 \omega_2}$.

The GCM in \Cref{fig:array_mediator} assumes the underlying causal mechanisms driving student behavior are symmetric with respect to permutations of the rows and of the columns, i.e. neither the ordering of the students nor the ordering of the topics matters.
The group of two separate permutations is denoted $\Sb^2$.

We will use these three examples as illustrations in our discussion of GCMs, but they are just examples. 
We are interested in GCMs for any structured variables, any causal graph, and any symmetry.

\paragraph{Formalizing geometric causal models.} To formalize GCMs, 
we relax the standard assumptions of structural causal models, so they can generate non-i.i.d. data. 
We rely on the tools of group theory, and assume the set of transformations $\Gb$ forms a symmetry group (\Cref{sec:background}). The GCM recipe, in brief, is:
\begin{enumerate}
\item We write a structural causal model as usual, but we assume the model enjoys a symmetry. The symmetry is built into the causal mechanisms, which take structured variables as arguments. The latent noise is i.i.d. at each $\omega$. 
	\item Marginalizing out the noise, we obtain a causal graphical model whose joint distribution is non-i.i.d. across $\omega$, and which respects the symmetry.
\item We define causal estimands as usual, and apply do-calculus to identify them as usual.
	\item We estimate the causal effect by incorporating the symmetry into a probabilistic model, and fitting it to the data. 
\end{enumerate}

The paper is organized as follows.
We first define causal mechanisms that are equivariant to a given symmetry group (\Cref{sec:background}).
Next, we compile individual mechanisms into geometric causal models (\Cref{sec:gcm}).
We study identification and estimation (\Cref{sec:id_estimate_ex}).
We find, for example, that for each of the GCMs in \Cref{fig:examples} we can identify the effect of the treatment $\ab$ on the outcome $\yb$. 
Next, we provide a detailed theoretical justification for our identification and estimation procedures (\Cref{sec:theory}).
Finally, we illustrate an application of GCMs to genomics, using the symmetries of DNA and geometric deep learning methods (\Cref{sec:application}).

Code for all empirical results is available at \url{https://github.com/EWeinstein/GCM}.

\section{Background: Symmetries and Equivariance} \label{sec:background}

\begin{figure}[t]
\centering
\begin{subfigure}[t]{0.3\textwidth}
\centering
\begin{tikzpicture}

\node[obs]                               (y) {$\yb$};
  \node[obs, left=.6cm of y] (x) {$\xb$};

\edge {x} {y} ;

\plate{in} {(y)(x)} {$\Omega, \Gb$};

\end{tikzpicture}
\caption{} \label{fig:single_arrow}
\end{subfigure}
\begin{subfigure}[t]{0.3\textwidth}
\centering
\begin{tikzpicture}

\node[obs]                               (y) {$\yb$};
  \node[obs, left=1cm of y] (z) {$\ab$};
  \node[obs, above=.6cm of z, xshift=.8cm] (x) {$\xb$};

\edge {x,z} {y} ;
  \edge {x}{z};

\plate{in} {(y)(z)(x)} {$\Omega, \Gb$};

\end{tikzpicture}

\caption{} \label{fig:basic_example}
\end{subfigure}
\caption{\textbf{Basic GCMs.} (a) A treatment $\xb$ affecting an outcome $\yb$. (b) A confounder $\xb$ affecting a treatment $\ab$ and an outcome $\yb$.}
\end{figure}

We are interested in modeling causal relationships among structured variables. 
As an example, consider tree cover $\xb = (x_\omega)_{\omega \in \Omega}$ and temperature $\yb = (y_\omega)_{\omega \in \Omega}$ across a city, $\Omega = \Rb^2$.
To understand the impact that tree cover has on temperature, we could model the relationship as,
\begin{equation}
	\yb = \f(\xb)
\end{equation}
where $\f:\Xc^\Omega \to \Yc^\Omega$ is a function mapping between structured variables (\Cref{fig:single_arrow}). 

We aim to learn about $\f$ using data from a single city, meaning just one $\xb$ and one $\yb$. 
With no constraints on $\f$, the problem is hopeless. 
But treating each location in the city as independent, so that $y_\omega$ just depends on $x_\omega$, is unrealistic. 
Temperature at one location, $y_\omega$, could be impacted by a park nearby, $x_{\omega'}$.

Instead, we postulate that while there might be dependence across different locations in the two variables, there is no dependence on the \textit{absolute} location, only on the relative location. So if we shift the entire input $\xb$ by $\tau$, the output $\yb$ also shifts by $\tau$. Mathematically, for any $\tau \in \Rb^2$,
\begin{equation} \label{eqn:equivariant_example}
	(y_{\omega - \tau})_{\omega \in \Omega} = \f((x_{\omega-\tau})_{\omega \in \Omega}).
\end{equation}
Intuitively, this implies there is nothing special about a particular point $\omega$ in the plane $\Omega$. All that matters for $\yb$ is the surrounding context of $\xb$.

\Cref{eqn:equivariant_example} is an instance of \textit{equivariance}, where transforming a function's input transforms the output in the same way.
In this case the transformation is a shift in location.
In models of sequences, we may have equivariance to permutations. In models of arrays, we may have equivariance to permutations of the rows or the columns.
GCMs postulate that the causal mechanisms underlying data respect some equivariance.

\paragraph{Groups and actions.} These ideas are made precise using group theory. \begin{definition}[Group] 
	A group $(\Gb, \cdot)$ with elements $\phi \in \Gb$ and binary operator $\cdot$, (a) is closed under composition: $\phi \cdot \phi' \in \Gb$ for all $\phi, \phi' \in \Gb$, (b) is associative: $\phi \cdot (\phi' \cdot \phi'') = (\phi \cdot \phi') \cdot \phi''$ for all $\phi,\phi',\phi'' \in \Gb$ (c) has an identity $\ID \in \Gb$ such that $\phi \cdot \ID = \ID \cdot \phi = \phi$ for all $\phi \in \Gb$ and (d) has a unique inverse $\phi^{-1} \in \Gb$ for every element $\phi \in \Gb$, satisfying $\phi^{-1}\cdot \phi = \phi \cdot \phi^{-1} = \ID$.
\end{definition} 
\noindent We study groups of transformations. That is, the elements of the group can transform elements of another set, such as indices $\omega \in \Omega$ or variables $\xb \in \Xc^\Omega$.
The \textit{action} of an element $\phi$ specifies this transformation.
For example, actions can change the location of points in space or of entries in a table.
\begin{definition}[Group action] \label{def:transform_group}
	The elements $\phi \in \Gb$ of a group $(\Gb, \cdot)$ act on a set $S$, i.e. $\phi: S \to S$, in a manner that follows the operations of the group: $\phi'(\phi(s)) = (\phi' \cdot \phi)(s)$ for all $\phi,\phi'\in \Gb$, $s \in S$, and $\ID (s) = s$ for $s \in S$.
\end{definition}
\noindent An example is the group of shifts, $\phi_\tau(\omega) = \omega + \tau$, where $\omega,\tau \in \Rb^d$. Shifts compose: $(\phi_\tau \cdot \phi_{\tau'})(\omega) = \phi_{\tau + \tau'}(\omega) = \omega + \tau + \tau'$, associate: $(\phi_{\tau} \cdot (\phi_{\tau'} \cdot \phi_{\tau''}))(\omega) = ((\phi_{\tau} \cdot \phi_{\tau'}) \cdot \phi_{\tau''})(\omega) = \omega + \tau + \tau' + \tau''$, include an identity: $\ID(\omega) = \phi_{0}(\omega) = \omega + 0$, and have an inverse: $(\phi_\tau \cdot \phi_{-\tau})(\omega) = \phi_{0}(\omega) = \omega$. We notate this \textit{translation group} as $\Tb^d = \{\phi_\tau : \tau  \in \Rb^d\}$.

Some other groups of transformations include,
\begin{itemize}
	\item \textit{Sequence permutations.} For sequence data, consider the group of permutations $\Sb$.
Each element $\phi_\pi \in \Sb$ acts on $\omega \in \Nb$ as $\phi_\pi(\omega) = \pi_\omega$, for a permutation $\pi$.
\item \textit{Array permutations.} For array data, consider the group of transformations that separately permute the rows and columns, $\Sb^2$.
Each element $\phi_{\pi,\pi'} \in \Sb^2$ acts on $\omega \in \Nb^2$ as $\phi_{\pi,\pi'}(\omega) = (\pi_{\omega_1}, \pi'_{\omega_2})$, for permutations $\pi$ and $\pi'$.
\end{itemize}

These examples illustrate how groups can act on indices $\omega \in \Omega$, but we can apply the same groups to transform structured variables. 
\begin{definition}[Index transform] \label{def:index_transform}
	We call the action of $\phi \in \Gb$ on $\xb \in \Xc^\Omega$ an ``index transform'' if it can be written,
	\begin{equation} \label{eqn:index_transform}
	\phi(\xb) = (x_{\phi^{-1}(\omega)})_{\omega \in \Omega}.
\end{equation}
\end{definition}
\noindent For example, consider what would happen if we shifted a spatial variable $\xb \in \Xc^{\Rb^2}$ by a distance $\tau$, i.e. $\phi_\tau(\xb)$. The value of the shifted variable at position $\omega$ should equal the value of the unshifted variable at the position $\omega - \tau$. Meaning, $(\phi_\tau(\xb))_\omega = x_{\omega - \tau} = x_{\phi_{-\tau}(\omega)}=x_{\phi^{-1}_\tau(\omega)}$. So, the group of shifts is an index transform of spatial variables.

Not all actions are index transformations. Some groups flip, spin, or squash $\xb$. However, index transforms play a central role in our causal theory. Our exposition will focus on index transforms, and we will assume throughout \Cref{sec:gcm,sec:id_estimate_ex} that our group just consists of index transforms.
\begin{assumption}[Group of index transforms]\label{asm:index_transform}
	Every element $\phi \in \Gb$ is an index transform of $\Xc^\Omega$. \end{assumption}
\noindent In the theory section (\Cref{sec:theory}) we relax this assumption to consider more general actions on $\Xc^\Omega$, such as rotations and reflections.

\paragraph{Equivariance.} GCMs postulate that the relationship between cause and effect is \textit{equivariant} with respect to a group of transformations.
\begin{definition}[Equivariant functions]
A function $\f: \Xc^\Omega \to \Yc^\Omega$ is equivariant with respect to a group of transformations $\Gb$ if for all $\phi \in \Gb$,
	\begin{equation}
	\phi(\f(\xb)) = \f(\phi(\xb)).
\end{equation}
\end{definition}

For example,
\begin{itemize}
	\item 
\textit{Sequence permutations.} A function $\f(\xb) = (\g(x_\omega))_{\omega \in \Nb}$ that applies the same transformation to each element of $\xb$ is equivariant to permutations,
$$f(\phi_\pi(\xb)) = (g(x_{\pi^{-1}(\omega)}))_{\omega \in \Omega} =  (f(\xb)_{\pi^{-1}(\omega)})_{\omega \in \Omega} = \phi_\pi(f(\xb)).$$

\item \textit{Spatial shifts.} A function $\f(\xb)= (\g(x_{\ball}))_{\omega \in \Rb^d}$ that applies the same transformation to each neighborhood $\ball = \{\|\omega - \omega'\| \le \delta\}$ in $\xb$ is equivariant to shifts
$$\f(\phi_\tau(\xb)) = (\g(x_{\Nc(\omega-\tau)}))_{\omega \in \Rb} = \phi_\tau(\f(\xb)).$$
When $\g$ is linear, this equivariant function is a \textit{convolution}.
\end{itemize}
Building and learning equivariant functions is a central topic of geometric deep learning, where the notion of convolution has been generalized to arbitrary groups of transformations, extending convolutional neural networks \citep{Cohen2016-oy,Kondor2018-pz}. 
We employ these ideas to define and learn causal models.

\section{Geometric Causal Models} \label{sec:gcm}

We construct geometric causal models (GCMs).  \Cref{sec:gscm} defines GCMs at the level of structural equations, with deterministic mechanisms and stochastic noise. 
\Cref{sec:gcgm} derives causal graphical models that use stochastic mechanisms.
 
\subsection{Geometric Structural Causal Models} \label{sec:gscm}

GCMs are causal models of structured variables, such as sequences, arrays, networks or spatial processes. GCMs allow for complex dependencies within and between causal variables, giving rise to non-i.i.d. data. But they postulate an underlying regularity: the causal mechanisms are equivariant with respect to a group of transformations $\Gb$.

For example, consider a GCM with the causal graph in \Cref{fig:basic_example}. The \textit{endogenous} variables $\xb$, $\ab$, $\yb$ that appear in the graph are each structured variables. The graph shows that the treatment $\ab$ is affected by the confounder $\xb$, while the outcome variable $\yb$ is causally affected by both the treatment $\ab$ and the confounder $\xb$.
The underlying structural equations of the GCM are
\begin{align}
	\epsilon^\xb_\omega \overset{iid}\sim \pr(\epsilon^\xb) \quad \quad \xb &= \f^\xb(\epsb^\xb)\\
	\epsilon^\ab_\omega \overset{iid}\sim \pr(\epsilon^\ab) \quad \quad \ab &= \f^\ab(\xb, \epsb^\ab) \label{eqn:GSCM_a}\\ 
	\epsilon^\yb_\omega \overset{iid}\sim \pr(\epsilon^\yb) \quad \quad \yb &= \f^\yb(\ab, \xb, \epsb^\yb).
\end{align}
Each endogenous variable is generated by a deterministic causal \textit{mechanism}, $\f$, which is equivariant: $\f^
\yb(\phi(\ab), \phi(\xb), \phi(\epsb^\yb)) = \phi(\f^\yb(\ab, \xb, \epsb^\yb))$.
Each $\f$ takes as input parent variables in the graph and exogenous noise $\epsb$.
The value of the noise at each $\omega \in \Omega$ is drawn i.i.d., but equivariant mechanisms can mix this information to create dependence across positions in  $\xb$, $\ab$ and $\yb$.

Standard structural causal models are a special case, where the variables are sequences, $\Omega = \Nb$, and the mechanisms are equivariant to permutations, $\Gb = \Sb$:
\begin{align}
	\epsilon^x_\omega \sim \pr(\epsilon^x) \quad &\quad x_\omega = \g^x(\epsilon^x_\omega) \label{eqn:gscm-seq-x}\\
	\epsilon^a_\omega \sim \pr(\epsilon^a) \quad &\quad a_\omega = \g^a(x_\omega, \epsilon^a_\omega) \label{eqn:gscm-seq-a}\\
	\epsilon^y_\omega \sim \pr(\epsilon^y) \quad &\quad y_\omega = \g^y(a_\omega, x_\omega, \epsilon^y_\omega) \label{eqn:gscm-seq-y}
\end{align}
for $\omega \in \Nb$. In this special case, the causal mechanisms act separately on each unit $\omega$, so each datapoint $(x_\omega, a_\omega, y_\omega)$ is drawn i.i.d.

Stepping beyond i.i.d., consider a GCM for 1D spatial data, $\Omega = \Rb$, whose mechanisms are equivariant to shifts, $\Gb = \Tb$:
\begin{align}
	\epsilon^x_\omega \sim \pr(\epsilon^x) \quad &\quad x_\omega = \g^x(\epsilon^x_{\Nc(\omega)})\\
	\epsilon^a_\omega \sim \pr(\epsilon^a) \quad &\quad a_\omega = \g^a(x_{\Nc(\omega)}, \epsilon^a_{\Nc(\omega)})\\
	\epsilon^y_\omega \sim \pr(\epsilon^y) \quad &\quad y_\omega = \g^y(a_{\Nc(\omega)}, x_{\Nc(\omega)}, \epsilon^y_{\Nc(\omega)}),
\end{align}
for $\omega \in \Rb$, where $\Nc(\omega) = \{\omega': |\omega' - \omega| \le 1\}$ is a spatial neighborhood of $\omega$. 
This model generates correlation and interference across space. Neighbors $x_\omega$ and $x_{\omega + 1}$ are correlated, since both depend on $\epsilon^x_{\omega} \ldots \epsilon^x_{\omega+1}$. The treatment at one location, $a_\omega$, affects the outcome at another, $y_{\omega+1}$.

\paragraph{Theory} We now state a general definition of geometric structural causal models. Let $\xb^{\pa(v)}$ denote the parents of a variable $\xb^v$ in the causal graph, for all $v \in [V] \triangleq \{1, \ldots, V\}$.

\begin{definition}[Geometric structural causal model] \label{def:GSCM}
A geometric structural causal model (GSCM) $\Mc^{\scm}$ with a group of index transforms $(\Gb, \cdot)$ is defined by 
\begin{enumerate}
\item endogenous structured variables $\xb^{1}, \ldots,\xb^V$, where $\xb^v \in (\Xc^v)^\Omega$ for $v \in [V]$,
\item a directed acyclic graph $\Gc$, \item distributions $\pr(\epsilon^v)$ over exogenous noise variables $\epsb^v$ for $v \in [V]$, and 
\item causal mechanisms $\f^v$ that are equivariant to $\Gb$,
\begin{equation} \label{eqn:mechanis-equiv}
	\f^v(\phi(\xb^{\pa(v)}), \phi(\epsb^v)) = \phi(\f^v(\xb^{\pa(v)}, \epsb^v))
\end{equation} for all $\phi \in \Gb$ and $v \in [V]$.
\end{enumerate}
Each endogenous variable is generated as,
\begin{equation} \label{eqn:gcm_generate}
	\epsilon_\omega^v \overset{iid}\sim \pr(\epsilon^v) \quad \quad \xb^v = \f^v(\xb^{\pa(v)}, \epsb^v),
\end{equation}
for $\omega \in \Omega$ and $v \in [V]$.
\end{definition}
\noindent The key condition is equivariance (\Cref{eqn:mechanis-equiv}). In a standard causal model, the causal mechanisms must act separately on each unit $\omega$, so we could simplify \Cref{eqn:gcm_generate} to $x^v_\omega = \f^v(x^{\pa(v)}_\omega, \epsilon^v_\omega)$.
In a GSCM, this simplification does not hold. We assume a deeper symmetry.
Equivariance permits the model to describe interference and correlation across units. 

In all other respects, GSCMs work just like conventional SCMs. 
They can describe observations, interventions and counterfactuals \citep{Bareinboim2021-ro}. In this article, we focus mainly on using observations to learn about hard interventions.
\begin{definition}[GSCM after a hard intervention] \label{def:intervene}
	Under a hard intervention $\rmdo(\xb^v=\xb^v_\star)$, the causal mechanism $\xb^v = \f^v(\xb^{\pa(v)}, \epsb^v)$ in $\Mc^{\scm}$ is replaced by $\xb^v = \xb^v_\star$.
\end{definition}
\noindent The intervention may give every unit the same treatment, $\xb^v_{\star,\omega}=x_\star^v$ for $\omega \in \Omega$, or it may deliver treatments that vary across space, across time or across nodes in a network.

Which variables should be included in a GSCM? Variables with one or no children can be marginalized out of a GSCM, by absorbing their mechanism and noise into their child (\Cref{apx:marginalize})~\citep[][Def. 5,6]{Spirtes2010-cu,Richardson2002-sp,Janzing2022-rn}.
This is because the composition of equivariant functions is equivariant.
Variables with more than one child, \textit{confounders}, cannot be marginalized out.
These rules are the same as for conventional SCMs.

\subsection{Geometric Causal Graphical Models} \label{sec:gcgm}

Structural causal models describe causal processes using deterministic mechanisms and stochastic noise. \textit{Causal graphical models}
(CGMs), also known as \textit{causal Bayesian networks} or \textit{agnostic causal DAGs}, use stochastic mechanisms~\citep{shalizi2013advanced,Lauritzen2001-vi,Bareinboim2021-ro,Richardson2013-yy}. An SCM can be rewritten as a CGM by marginalizing out the exogenous noise.
We apply the same logic to GCMs, and integrate out the noise of GSCMs to obtain geometric causal graphical models (GCGMs). 

When we integrate out the noise in the mechanism generating $\xb$ in \Cref{fig:basic_example}, we obtain the distribution,
\begin{equation}
	\xb \sim \Prob(\xb \in \cdot) \triangleq \int \Ib(\f^\xb(\epsb^\xb) \in \cdot) \pr(\epsb^\xb)\di \epsb^\xb.
\end{equation}
Because $\f^\xb$ is equivariant and $\pr(\epsb^\ab)$ is i.i.d., this distribution is \textit{invariant} to transformations. \begin{definition}[Invariant distributions] \label{def:invariant_dist}
	A distribution $\pr(\xb)$ is invariant with respect to a group of transformations $\Gb$ when $\phi(\xb)$ has the same distribution as $\xb$, i.e.
	\begin{equation}
	\pr(\phi(\xb)) = \pr(\xb)
\end{equation}
for all $\xb \in \Xc^\Omega$ and $\phi \in \Gb$.
\end{definition} 
\noindent To be clear, just because the distribution is invariant, does not mean the variable itself is invariant, i.e. $\phi(\xb)$ need not equal $\xb$, it just has equal probability.

Next consider the mechanism generating $\ab$ in \Cref{fig:basic_example}. We obtain the conditional distribution,
\begin{equation}
	\ab \sim \Prob(\ab \in \cdot \mid \xb) \triangleq \int \Ib(\f^\ab(\xb, \epsb^\ab) \in \cdot) \pr(\epsb^\ab)\di \epsb^\ab.
\end{equation}
Since $\f^\ab$ is equivariant and $\pr(\epsb^\ab)$ is i.i.d., this distribution is \textit{conditionally equivariant}. \begin{definition}[Conditionally equivariant distributions]
	A distribution $\pr(\ab \mid \xb)$ is conditionally equivariant with respect to a group of transformations $\Gb$, if
	\begin{equation}
	\pr(\ab \mid \xb) = \pr(\phi(\ab) \mid \phi(\xb))
\end{equation}
for all $\ab \in \Ac^\Omega$, $\xb \in \Xc^\Omega$ and $\phi \in \Gb$.
\end{definition} 
\noindent In words, if we transform the input $\xb$ by $\phi$, the conditional distribution of the output $\ab$ transforms in the same way. Invariance is a special case of conditional equivariance, conditioning on nothing. 

Now, integrating out the noise in each mechanism of \Cref{fig:basic_example}, we obtain a GCGM,
\begin{align}
	\xb &\sim \pr(\xb) \quad \quad \quad \quad \quad \, \, \text{ where } \pr(\xb) = \pr(\phi(\xb))\\
	\ab &\sim \pr(\ab \mid \xb) \quad \quad \quad \quad \text{ where }\pr(\ab \mid \xb) = \pr(\phi(\ab) \mid \phi(\xb))\\ 
	\yb &\sim \pr(\yb \mid \ab, \xb) \quad \quad \quad \text{ where } \pr(\yb \mid \ab, \xb) = \pr(\phi(\yb) \mid \phi(\ab), \phi(\xb)).
\end{align}
The joint distribution over the endogenous variables is thus invariant: $\pr(\yb,\ab,\xb) = \pr(\phi(\yb),\phi(\ab),\phi(\xb))$. 

To illustrate, we return to the setting of 1D spatial data under the group of shifts, $\Omega = \Rb$,  $\Gb = \Tb$. 
Consider a GSCM with additive Gaussian noise,
\begin{align}
\epsilon^x_\omega \overset{iid}\sim \mathrm{Normal}(0, 1) \quad \quad x_\omega &= \f^x(\epsb^x)_\omega = \int A^x(\omega - \omega')\epsilon^x_{\omega'} \di\omega'\\
\epsilon^a_\omega \overset{iid}\sim \mathrm{Normal}(0, 1) \quad \quad a_\omega &= \f^a(\xb, \epsb^a)_\omega = \g^a(x_{\Nc(\omega)}) + \int A^a(\omega - \omega')\epsilon^a_{\omega'} \di\omega'\\
\epsilon^y_\omega \overset{iid}\sim \mathrm{Normal}(0, 1) \quad \quad y_\omega &= \f^y(\ab, \xb, \epsb^y)_\omega = \g^y(a_{\Nc(\omega)}, x_{\Nc(\omega)}) + \int A^y(\omega - \omega')\epsilon^y_{\omega'} \di\omega'
\end{align}
for $\omega \in \Rb$ and functions $A^x, A^a, A^y: \Rb \to \Rb$. The functions $A^v$ linearly weight the contribution of noise at different locations depending on their distance from $\omega$.
Integration gives the GCGM,
\begin{align} 
	\xb &\sim \GPc(0, k^x)\\
	\ab &\sim \GPc((\g^a(x_{\Nc(\omega)}))_{\omega \in \Rb}, k^a)\\
	\yb &\sim \GPc((\g^y(a_{\Nc(\omega)}, x_{\Nc(\omega)}))_{\omega \in \Rb}, k^y) \label{eqn:gcgm_ex_spatial}
\end{align}
where $\GPc(\mu, k^v)$ denotes a Gaussian process with mean function $\mu$ and kernel $k^v(\omega, \omega') \triangleq \int A^v(\omega - \tilde \omega) A^v (\omega' - \tilde \omega) \di \tilde \omega $. 
Here, the mean of each GP depends on a sliding window of the variable's parents, and so is equivariant to shifts of the parents. The GP adds noise around the mean, with spatial autocorrelation. 
The GP's kernel is \textit{stationary}, i.e. it just depends on the distance between points.
So, the causal mechanisms are conditionally equivariant.

\paragraph{Theory}
To summarize, we define GCGMs for arbitrary groups and graphs.
\begin{definition}[Geometric causal graphical model] \label{def:GCGM}
	Consider a geometric structural causal model $\Mc^\scm$ (\Cref{def:GSCM}). The corresponding causal graphical model has stochastic mechanisms
	\begin{equation}  \label{eqn:gcgm_mechanism}
	\xb^v \sim \Prob(\xb^v \in \Xi \mid \xb^{\pa(v)}) \triangleq \int \Ib(\f^v(\xb^{\pa(v)}, \epsb^v) \in \Xi) \pr(\epsb^v)\di \epsb^v
\end{equation}
for $v \in [V]$. 
\end{definition}
\begin{proposition} \label{prop:stochastic_mech_equiv}
The stochastic mechanisms (\Cref{eqn:gcgm_mechanism}) are conditionally equivariant.
\end{proposition}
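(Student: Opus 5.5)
The plan is to verify conditional equivariance directly at the level of measurable sets. Concretely, I will show that for every measurable set $\Xi \subseteq (\Xc^v)^\Omega$, every parent configuration $\xb^{\pa(v)}$, and every $\phi \in \Gb$,
\begin{equation*}
\Prob(\xb^v \in \phi(\Xi) \mid \phi(\xb^{\pa(v)})) = \Prob(\xb^v \in \Xi \mid \xb^{\pa(v)}).
\end{equation*}
This is the measure-theoretic form of $\pr(\ab \mid \xb) = \pr(\phi(\ab)\mid\phi(\xb))$. Two facts do all the work: the equivariance of $\f^v$ (\Cref{eqn:mechanis-equiv}), and the invariance of the i.i.d.\ noise law under index transforms (\Cref{asm:index_transform}).

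The first step is to establish that the product measure $\pr(\epsb^v) = \bigotimes_{\omega\in\Omega}\pr(\epsilon^v_\omega)$ is invariant under $\phi$. Since $\phi$ acts as an index transform, $\phi(\epsb^v) = (\epsilon^v_{\phi^{-1}(\omega)})_{\omega}$ merely relabels coordinates by the bijection $\phi^{-1}$ of $\Omega$. A product of identical marginals is unchanged by a coordinate relabeling, so $\phi(\epsb^v) \overset{d}{=} \epsb^v$. To make this rigorous, I check that the two measures agree on cylinder sets $\{\epsilon_{\omega_1}\in B_1,\dots,\epsilon_{\omega_k}\in B_k\}$. Both assign these sets probability $\prod_i \pr(B_i)$, because $\phi^{-1}$ is injective and so sends distinct indices to distinct indices. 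Uniqueness of extension from this $\pi$-system then gives equality on the product $\sigma$-algebra.

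The second step is a change of variables. I note that $\f^v(\phi(\xb^{\pa(v)}), \epsb) \in \phi(\Xi)$ holds iff $\f^v(\phi(\xb^{\pa(v)}),\phi(\phi^{-1}(\epsb)))\in\phi(\Xi)$. By equivariance, this is iff $\phi(\f^v(\xb^{\pa(v)},\phi^{-1}(\epsb)))\in\phi(\Xi)$, and since $\phi$ is a bijection, this is iff $\f^v(\xb^{\pa(v)},\phi^{-1}(\epsb))\in\Xi$. Substituting into \Cref{eqn:gcgm_mechanism} gives
\begin{equation*}
\Prob(\xb^v \in \phi(\Xi) \mid \phi(\xb^{\pa(v)})) = \int \Ib\bigl(\f^v(\xb^{\pa(v)}, \phi^{-1}(\epsb^v)) \in \Xi\bigr)\pr(\epsb^v)\,\di\epsb^v.
\end{equation*}
Pushing forward by $\phi^{-1}$, which also lies in $\Gb$ and so preserves $\pr(\epsb^v)$ by the first step, turns the right side into $\Prob(\xb^v\in\Xi\mid\xb^{\pa(v)})$, as required. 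When densities exist, the density statement in the definition follows immediately. Invariance of root distributions is the special case with no parents.

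I expect the main obstacle to be the measure-theoretic bookkeeping rather than the algebra. For uncountable $\Omega$ such as $\Rb^2$, "i.i.d.\ at each $\omega$" must be read as a product measure on the product $\sigma$-algebra, and $\phi$ and $\f^v$ must be measurable so that the sets $\phi(\Xi)$ and the indicator integrands make sense. I will handle this by assuming these standing measurability conditions implicitly, as the definition of \Cref{eqn:gcgm_mechanism} already does. The argument then rests only on the cylinder-set verification in the first step, which works for any index set because it uses nothing beyond the bijectivity of $\phi$ on $\Omega$.
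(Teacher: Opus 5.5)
Your proposal is correct and follows essentially the same route as the paper's proof: first the invariance of the i.i.d.\ noise law under index transforms, then equivariance of $\f^v$ together with bijectivity of $\phi$ to move $\phi$ from the parents and noise onto the target set, giving $\Prob(\xb^v \in \Xi \mid \xb^{\pa(v)}) = \Prob(\xb^v \in \phi(\Xi) \mid \phi(\xb^{\pa(v)}))$. Your cylinder-set and $\pi$-system argument simply makes rigorous the paper's one-line identity $\prod_{\omega} \pr(\epsilon^v_\omega) = \prod_{\omega} \pr(\epsilon^v_{\phi^{-1}(\omega)})$.
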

\noindent Proof in \Cref{apx:conditional-equiv}.
Interventions in GCGMs work just as in GSCMs, with the mechanism for the intervened variable replaced by its intervened value (\Cref{def:intervene}).

\section{Identification and Estimation} \label{sec:id_estimate_ex}

The goal of causal modeling is to understand the effects of interventions. 
We observe the world as it is, but want to predict the result of a change.
Causal inference methods reason from observation to intervention by first asking what we could learn from infinite data, and then approximating it from the finite data we have at hand. The usual logic is:
\begin{enumerate}
	\item \textit{Observation.} With unlimited observational data, $x^{\Vc_\obs}_{1:\infty} \overset{iid}{\sim}\pr(x^{\Vc_\obs})$, we could learn the distribution underlying the observed causal variables, $\pr(x^{\Vc_\obs})$. Some variables may be hidden, however, so we do not necessarily know the joint over all variables $\Vc$.
	\item \textit{Identification.} Using the observational distribution and our model, we compute what would happen if we intervened on the system. We write the effect of the intervention as a function of the observational distribution, $\psi(\pr(x^{\Vc_\obs}))$. \item \textit{Estimation.} Based on the finite observational data we have at hand, we estimate the effect $\psi(\pr(x^{\Vc_\obs}))$ and quantify our remaining uncertainty.
\end{enumerate}

We extend this logic to GCMs.
\begin{enumerate}
	\item \textit{Observation.} With a complete sample from the observational distribution, $\xb^{\Vc_\obs}\sim \pr(\xb^{\Vc_\obs})$, we could learn the underlying distribution.
	\item \textit{Identification.} From the observational distribution and the model, we compute the effect of an intervention, $\psi(\pr(\xb^{\Vc_\obs}))$.
	\item \textit{Estimation.} In practice we have finite observational data, meaning we observe only part of $\xb^{\Vc_\obs}$. Based on this data, we estimate $\psi(\pr(\xb^{\Vc_\obs}))$ and quantify uncertainty.
\end{enumerate}

What justifies this reasoning, and when can it be applied? Here, we summarize the theoretical results from \Cref{sec:theory}. Then we illustrate how to apply the machinery in practice.

\subsection{Theory Overview}
\paragraph{Observation.} To learn a distribution from a single sample, $\xb^{\Vc_\obs}\sim \pr(\xb^{\Vc_\obs})$, it must have some underlying regularity.
This is captured mathematically by the notion of \textit{ergodicity}, meaning intuitively that a single realization of the system reveals its underlying laws (\Cref{def:ergodic}) \citep{Sarig2023-bz}.
We show that GCMs are ergodic under general conditions, namely when the domain $\Omega$ and group $\Gb$ are infinite, and the group's action mixes elements of the domain (\Cref{asm:mix}). 
Thanks to GCMs' ergodicity, we can infer $\pr(\xb^{\Vc_\obs})$ from a single infinite sample, $\xb^{\Vc_\obs} \sim \pr(\xb^{\Vc_\obs})$ (\Cref{thm:gcm_ergodic}).

\paragraph{Identification.} We can use \textit{do-calculus} for identification in GCMs \citep{Pearl2009-fh}.
Do-calculus computes causal effects without making assumptions about the parametric form of the model.
Though built for the usual i.i.d. settings, it ports to GCMs unchanged (\Cref{thm:docalculus}). 
We calculate $\psi(\cdot)$ from the causal graph using standard do-calculus, then plug in the structured data distribution $\pr(\xb^{\Vc_\obs})$ to obtain the GCM's effect.
This works because GCMs follow a Markov decomposition, just like usual i.i.d. models. Note this particular identification strategy assumes the causal graph and the symmetry group are both known. 

\paragraph{Estimation.} Finally we estimate the effect $\psi(\pr(\xb^{\Vc_\obs}))$ from data. 
In practice we cannot observe all of $\xb^{\Vc_\obs}$. 
Instead, we have a finite dataset, obtained by measuring a subset of the domain $A \subseteq \Omega$.

For estimation, the model must satisfy positivity, i.e. the intervention must have non-zero probability. To enable positivity we assume finite interference: the value of $x_\omega^v$ can only depend on a finite number of other indices $\omega'$ in the causal parents of $v$ (\Cref{asm:finite_depend}).
This localized dependence ensures the intervention can occur repeatedly without parametric restrictions on the model.

To guarantee consistent estimates, we employ \textit{Lindenstrauss's ergodic theorem}, an extension of the law of large numbers to ergodic distributions with general group symmetries \citep{Lindenstrauss2001-sp,Austern2018-dy}.
It requires the group is \textit{amenable}, meaning intuitively it contains a stably increasing sequence of subsets (\Cref{asm:finite-tempered}). 
It implies estimates converge to the true $\psi(\pr(\xb^{\Vc_\obs}))$ as we observe more data, i.e. as $A$ grows (\Cref{thm:estimate_id}).

For uncertainty quantification, Bayesian methods are a natural choice.
The reason is that if we assume the data distribution is invariant, this implies the existence of a prior: the ergodic decomposition theorem says the data must be generated as 
\begin{align}
	\pr &\sim \pi(\pr) \label{eqn:ergodic-prior}\\
	\xb &\sim \pr(\xb) \label{eqn:ergodic-likelihood}
\end{align}
where $\pr$ is ergodic and $\pi(\pr)$ is a prior on ergodic distributions \citep[Theorem A1.4, Farrell-Varadarajan,][]{Kallenberg2005-gv}.
When the data is a sequence and the group consists of permutations (\Cref{sec:seq-ex}), this decomposition is de Finetti's theorem, a classical motivation for Bayesian inference \citep{Orbanz2015-fm}. When the data is an array and the group consists of permutations of the rows and columns, it is the Aldous-Hoover theorem \citep{Orbanz2015-fm}. 
In general, the result says that if we assume invariance, there must be a prior. Motivated by this, we use Bayesian models that follow the ergodic decomposition (\Cref{eqn:ergodic-prior,eqn:ergodic-likelihood}) to infer the causal effect $\psi(\pr(\xb^{\Vc_\obs}))$, and quantify our uncertainty with the posterior.

\subsection{Example: Sequence Data} \label{sec:seq-ex}

We first illustrate identification and estimation in GCMs in the familiar setting of sequence data and permutation invariance. Here GCMs reduce to conventional causal models. 

We consider the GCM in \Cref{fig:examples}b. We are interested in the effect that an intervention on $\ab$ has on $\yb$. In particular, we will investigate the distribution of $\yb$ after we set $\ab = \ab_\star$. 

\paragraph{Observation.} We consider what we can learn from a single sample from the observational distribution,
\begin{equation}
	\xb, \ab, \yb \sim \pr(\xb, \ab, \yb) = \int \pr(\ub, \xb, \ab, \yb) \di \ub.
\end{equation}
We assume the mechanisms are permutation equivariant (\Cref{eqn:gscm-seq-x,eqn:gscm-seq-a,eqn:gscm-seq-y}), so the joint distribution decomposes as,
\begin{equation}
	x_\omega, a_\omega, y_\omega \sim \pr(x, a, y) \quad \quad \text{ for } \omega \in \Nb. 
\end{equation}
That is, each element of the infinite sequence is drawn i.i.d. from a common underlying distribution $\pr(x, a, y)$. So from the entire structured variables $\xb, \ab, \yb$, we can learn $\pr(x, a, y)$ and hence $\pr(\xb, \ab, \yb)$.
This is an example of ergodicity: the i.i.d. distribution $\prod_{\omega\in \Nb} \pr(x_\omega, a_\omega, y_\omega)$ is ergodic under the group of permutations, so it is learnable from a single sample.

\paragraph{Identification.} With the observational distribution $\pr(\xb, \ab, \yb)$ in hand, we aim to compute the effect of an intervention, $\pr(\yb \s \rmdo(\ab = \ab_\star))$. 
We apply do-calculus on the causal graph (\Cref{fig:seq_confounder}), which gives the back-door adjustment formula,
\begin{equation} \label{eqn:seq_id}
\pr(\yb \s \rmdo(\ab = \ab_\star)) = \psi(\pr(\xb, \ab, \yb)) = \int \pr(\yb \mid \ab_\star, \xb) \pr(\xb) \di \xb
\end{equation}
Since the distribution is i.i.d., this simplifies to,
\begin{equation}
\pr(y_\omega \s \rmdo(\ab = \ab_\star)) = \int \pr(y_\omega \mid a_{\star\omega}, x) \pr(x) \di x
\end{equation}
for $\omega \in \Nb$. This formula describes the post-intervention distribution over each element of $\yb$.
In the special case where we give every unit $\omega$ the same treatment, $a_{\star1} = a_{\star2} = \ldots = a_\star$, we recover the standard back-door formula $\pr(y \s \rmdo(a = a_\star)) = \int \pr(y \mid a_{\star}, x) \pr(x) \di x$.

\paragraph{Estimation.} Using the identification formula, we can estimate the causal effect from data. 
We observe the variables $\xb,\ab,\yb$ at a finite subset of the domain, $A \subset \Omega = \Nb$.
Thanks to permutation invariance, we assume without loss of generality that $A=\{1, \ldots, n\}$.
Our dataset is then $\Dc = (x_{1:n}, a_{1:n}, y_{1:n})$.

We use the data to estimate the causal effect. One approach is to estimate each term of the identification formula separately.
The term $\pr(\yb \mid \ab, \xb)$ breaks down into $\pr(\yb \mid \ab, \xb)= \prod_{\omega=1}^\infty \pr(y_\omega \mid a_\omega, x_\omega)$, so we can learn it by regressing $y_\omega$ on $a_\omega$ and $x_\omega$.
For the term $\pr(\xb) = \prod_{\omega=1}^\infty \pr(x_\omega)$, we can estimate the distribution of the $x_\omega$. 
Then we plug each estimate into \Cref{eqn:seq_id}.
Many other estimation approaches have also been developed for this i.i.d. setting, for example by reparameterizing the model in terms of a propensity score \citep{Kennedy2024-ac}.

\subsection{Example: Spatial Data}

We now consider causal inference from spatial data (\Cref{fig:examples}d). The domain is the plane, $\Omega = \Rb^2$, and the symmetry is the group of shifts $\Gb = \Tb^2$.
The causal graph is the same as in the previous example, and we investigate the same effect: the distribution of $\yb$ after an intervention sets $\ab = \ab_\star$.

\paragraph{Observation.} We first consider what we could learn from a single sample from the observational distribution,
\begin{equation}
	\xb, \ab, \yb \sim \pr(\xb, \ab, \yb) = \int \pr(\ub, \xb, \ab, \yb) \di \ub.
\end{equation}
Here $\xb, \ab, \yb$ are each spatial processes. 
With access to $\xb, \ab, \yb$ over the entire domain $\Omega=\Rb^2$, we can learn the  ergodic distribution $\pr(\xb, \ab, \yb)$.

\paragraph{Identification.} From the observational distribution $\pr(\xb, \ab, \yb)$ we aim to compute the effect of an intervention, $\pr(\yb \s \rmdo(\ab = \ab_\star))$. 
We apply do-calculus to \Cref{fig:examples}d, deriving the  back-door correction,
\begin{equation} \label{eqn:space_id}
\pr(\yb \s \rmdo(\ab = \ab_\star)) = \int \pr(\yb \mid \ab_\star, \xb) \pr(\xb) \di \xb.
\end{equation}

\paragraph{Estimation.}
We observe the variables $\xb, \ab, \yb$ at subset of the domain $A \subseteq \Rb^2$. 
Based on this data, we learn a probabilistic model of $\pr(\yb \mid \ab, \xb)$. 
As an example, consider a conditionally equivariant Gaussian process model (\Cref{eqn:gcgm_ex_spatial}),
\begin{equation}
	\yb \sim \GPc(\beta^a \ast \ab + \beta^x \ast \xb, k^y) \label{eqn:spatial-gp-model}
\end{equation}
where $(\beta \ast \xb)_{\omega} = \int \beta_{\omega'} x_{\omega - \omega'} \di \omega' $ denotes a linear convolution and the kernel is stationary, i.e. $k^y(\omega, \omega') = \kappa^y(\omega - \omega')$ for a function $\kappa^y$.
The convolutions generate interference across positions; the kernel produces additional noise with spatial autocorrelation.
This model is like a standard linear regression, except it is translation rather than permutation equivariant.

After fitting the model, we can calculate a causal effect. As an example, we contrast the average outcome under $\rmdo(\ab = \ab_\star)$ to $\rmdo(\ab = \mathbf{0})$, i.e. giving every unit a treatment of 0. Thanks to linearity, the contribution of $\pr(\xb)$ to the identification formula cancels, and we obtain just the convolution,
\begin{equation} \label{eqn:spatial-te}
	\Eb[\Yb \s \rmdo(\ab = \ab_\star)] - \Eb[\Yb \s \rmdo(\ab = \mathbf{0})] = \beta^a \ast \ab_\star.
\end{equation}

One way to learn the effect from data is with Bayesian inference, placing priors on $\beta^a, \beta^x$ and $k^y$.  The posterior provides both a point estimate of the treatment effect, and credible intervals.

\paragraph{Simulation study.} We illustrate with a simulation. 
We parameterize the kernel as a radial basis function (RBF), $k^v(\omega, \omega') = b^v \exp(-\eta\|\omega - \omega'\|_2^2)$, and the coefficients as Gaussian bumps, $\beta^v_\omega = c^v \exp(-\lambda \|\omega\|_2^2)$, producing autocorrelation and interference. The mechanisms generating $\ub, \xb$ and $\ab$ are assumed to have the same Gaussian process form as for $\yb$. We generate synthetic data assuming the model is well-specified.

From this simulation, we obtain a spatial dataset. We assume the variables are measured at random points in the unit square, giving $\Dc = (x_\omega, a_\omega, y_\omega)_{\omega \in A}$, where $\omega_j \overset{iid}{\sim} \mathrm{Uniform}(0, 1)$ (\Cref{fig:spatial_gcm_a,fig:spatial_gcm_x,fig:spatial_gcm_y}).
To perform inference, we place weak priors on the parameters of the coefficients and kernel. We implement the model in NumPyro, a probabilistic programming language, and perform automated approximate Bayesian inference using MCMC, the No-U Turn Sampler (NUTS) \citep{Phan2019-br,Bingham2019-aa,Hoffman2014-ic}.
We estimate the effect on the average outcome of $\ab_\star = \mathbf{1}$ versus $\ab_\star = \mathbf{0}$ by computing the posterior mean and credible intervals for $\beta^a$ (\Cref{eqn:spatial-te}).
Details are in \Cref{apx:spatial-sim}.

The GCM provides an estimate of the causal effect (\Cref{fig:spatial_gcm_inference_example}). As one comparison, we used a conventional causal model, a Bayesian linear regression, that ignores interference and autocorrelation (CM). This corresponds to setting length scales of $\eta=\lambda=0$ in $\beta^a$ and $\kappa^y$. 
As another comparison, we used a geometric model that accounts for spatial dependence but ignores confounding (GM). This corresponds to setting $\beta^x = \mathbf{0}$. 
The GCM provided a more accurate treatment effect, and a credible interval that covered the true effect (\Cref{fig:spatial_gcm_inference_example}).
This performance gain held across 20 simulation configurations, with the GCM achieving lower average error (\Cref{fig:spatial_gcm_avg_error}) and better calibrated uncertainty (\Cref{fig:spatial_gcm_coverage}). 
In summary, GCMs can produce high quality causal effect estimates by accounting for both non-i.i.d. structure and confounding in the data.

\begin{figure}[t]
	\centering
	\begin{subfigure}[t]{0.24\textwidth}
		\centering
		\includegraphics[width=\textwidth]{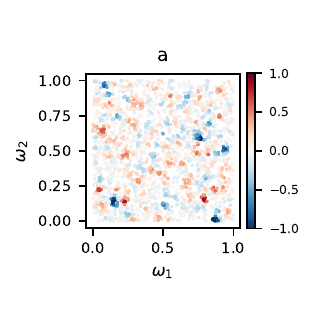}
		\caption{} \label{fig:spatial_gcm_a}
	\end{subfigure}
	\begin{subfigure}[t]{0.24\textwidth}
		\centering
		\includegraphics[width=\textwidth]{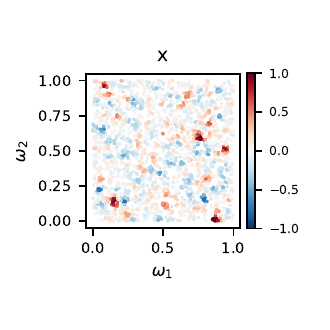}
		\caption{} \label{fig:spatial_gcm_x}
	\end{subfigure}
	\begin{subfigure}[t]{0.24\textwidth}
		\centering
		\includegraphics[width=\textwidth]{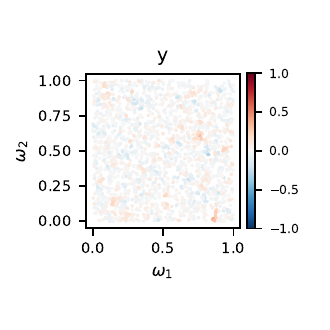}
		\caption{} \label{fig:spatial_gcm_y}
	\end{subfigure}
	\begin{subfigure}[t]{0.24\textwidth}
		\centering
		\includegraphics[width=\textwidth]{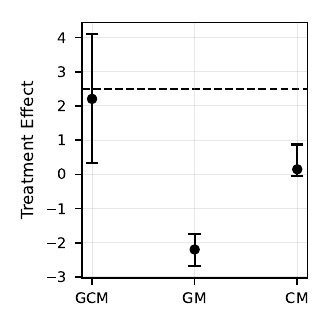}
		\caption{} \label{fig:spatial_gcm_inference_example}
	\end{subfigure}
	\caption{\textbf{Spatial GCM simulation study.} (a,b,c) Spatial data, irregularly sampled at 2000 points across the domain, consisting of the treatment $\ab$ (a), observed confounder $\xb$ (b) and outcome $\yb$ (c). (d) Estimated treatment effect, including posterior mean and credible interval (5th-95th). GCM: geometric causal model, GM: geometric model (no confounding correction), CM: conventional i.i.d. causal model.}\label{fig:spatial_gcm}
\end{figure}

\paragraph{Variations.} GCMs can also describe many variations and extensions of this spatial model. The domain could be one-dimensional, three-dimensional, or higher. The space could be discretized into a grid, $\Omega = \Zb^d$, rather than continuous. 
For time series data, we may work with the domain $\Omega = \Rb_+$ and the group of shifts $\Tb$. For spatiotemporal data, the product of space and time, $\Omega = \Rb^2 \times \Rb_+$ \citep{Christiansen2022-ip}.
The probabilistic model used for estimation could also be more flexible, e.g. using convolutions plus nonlinearities for the equivariant mechanisms $\f$ \citep{Bronstein2021-gz}.

\subsection{Example: Array Data}

We next consider causal inference from array data, such as data collected from students studying different topics, or cells expressing different genes.
This data is sometimes called a \textit{bipartite network} connecting two types of nodes, e.g. students and topics, rows and columns.
We denote the index set as $\Oarray$, so variables can describe connections, $\xb_{\omega_1 \omega_2}: \omega_1, \omega_2 \in \Nb^2$, row-specific values, $\xb_{\omega_1 \diamond}: \omega_1 \in \Nb$, column-specific values, $\xb_{\diamond \omega_2} : \omega_2 \in \Nb$, or all of the above.
We consider the group of permutations $\Sb^2$ in which the permutations act separately on the rows and columns of the array: $\phi_{\pi,\pi'}(\omega_1, \omega_2) = (\pi(\omega_1), \pi'(\omega_2))$, with the convention $\pi(\diamond) = \diamond$.

A distribution with this symmetry is called \textit{separately exchangeable}. In a GSCM with the symmetry, we can generate a single variable as,
\begin{equation}
	x_{\omega_1,\omega_2} = \g(\epsilon^x_{\omega_1 \omega_2},\epsilon^x_{\omega_1 \diamond},\epsilon^x_{ \diamond\omega_2}) \quad \quad \quad \epsilon^x_\omega \overset{iid}{\sim} \pr(\epsilon^x) \text{ for } \omega \in \Oarray.
\end{equation}
This structural equation matches the \textit{Aldous-Hoover representation} of a separately exchangeable matrix, and when $\xb$ is binary, the mechanism $\g$ is called the \textit{graphon}.
The Aldous-Hoover theorem says that all separately exchangeable ergodic distributions can be written this way \citep[Chap. 7][]{Kallenberg2005-gv}.
GCMs posit this Aldous-Hoover representation describes a causal relationship, and use it to model causal dependencies among variables.

We examine a GCM with the causal graph in \Cref{fig:examples}f.

\paragraph{Observation.} From a sample with infinite rows and columns,
\begin{equation}
	\ab, \xb, \yb \sim \pr(\ab, \xb, \yb) = \int \pr(\ub, \ab, \xb, \yb) \di \ub,
\end{equation}
we can learn the underlying ergodic distribution on arrays, $\pr(\ab, \xb, \yb)$.

\paragraph{Identification.} From the observational distribution $\pr(\xb, \ab, \yb)$ we compute the effect of an intervention, $\pr(\yb \s \rmdo(\ab = \ab_\star))$, using do-calculus,
\begin{equation}
\pr(\yb \s \rmdo(\ab = \ab_\star)) = \int \int \pr(\yb \mid \ab, \xb) \pr(\ab) \pr(\xb \mid \ab_\star) \di \xb \di \ab.
\end{equation}
This is a front-door adjustment, using the mediator $\xb$ to correct for hidden confounders $\ub$ \citep{Pearl2009-fh}.

\paragraph{Estimation.} We observe the variables $\xb, \ab, \yb$ only at a subset of the rows and columns.
Thanks to permutation invariance, we can assume without loss of generality these are the first $n$ rows and $m$ columns. 
We thus have a dataset $\Dc = (x_{1:n,1:m}, a_{1:n,1:m},y_{1:n,1:m})$.
To illustrate, we assume here $\xb$, $\ab$ and $\yb$ do not have row- or column-specific values, but the confounder $\ub$ does.

We can estimate causal effects based on this data using a probabilistic model. As an example we consider,
\begin{align}
	x_\omega &\sim \mathrm{Normal}(\gamma a_\omega + \epsilon_{\omega_1\diamond }^x + \epsilon_{\diamond\omega_2}^x, \sigma^x) \quad \quad \quad \quad \quad \quad \quad \quad \epsilon_{\omega_1\diamond }^x, \epsilon_{\diamond\omega_2}^x \sim \mathrm{Normal}(0, \tau)\\
	y_\omega &\sim \mathrm{Normal}(\beta^{a} a_\omega + \beta^x x_\omega + \epsilon_{\omega_1\diamond}^y + \epsilon_{\diamond\omega_2}^y, \sigma^y) \quad \quad \quad \quad \epsilon_{\omega_1\diamond}^y, \epsilon_{\diamond\omega_2}^y \sim \mathrm{Normal}(0, \tau)
\end{align}
for $\omega_1,\omega_2 \in \Nb^2$, where $\epsilon^x_{\omega_1\diamond}, \epsilon^x_{\diamond\omega_2}, \epsilon^y_{\omega_1\diamond}, \epsilon^y_{\diamond\omega_2}$ are per-row and per-column latent variables. Intuitively, each mechanism of this model is a linear regression with fixed effects for the row and column \citep{Wooldridge2010-dv}. The latent variables provide a low-dimensional matrix decomposition of the noise, modeling correlation across rows and columns.
The overall model is conditionally equivariant.
It estimates the treatment effect as,
\begin{equation} \label{eqn:array-te}
	\Eb[\Yb \s \rmdo(\ab = \ab_\star)] - \Eb[\Yb \s \rmdo(\ab = \mathbf{0})] = \beta^x \gamma\, \ab_\star.
\end{equation}
Here, the contribution of $\pr(\xb)$ to the identification formula cancels, and the treatment effect is the product of the effect of the treatment on the mediator, and of the mediator on the outcome.

To learn the effect from data, we place priors on $\beta^a, \beta^x, \gamma$ and the latent variables, $\epsilon^x, \epsilon^y$, and conduct Bayesian inference. We then examine the posterior over the effect. 

\paragraph{Simulation study.} We illustrate with a simulation. The mechanisms generating $\ub$, $\xb$ and $\yb$ are assumed to have the same linear plus latent form as for $\ab$. 
From this simulation, we obtain an array dataset.  
We assume the variables are measured for 200 rows (e.g. students) and 50 columns (e.g. topics), giving $\Dc = (a_{1:200,1:50}, x_{1:200,1:50}, y_{1:200,1:50})$ (\Cref{fig:array_gcm_a,fig:array_gcm_x,fig:array_gcm_y}).
To perform inference we place weak priors on the coefficients. We place priors on the latents that are wider than the data generating distribution, producing misspecification. 
We implement the model in NumPyro, and perform automated approximate Bayesian inference using stochastic variational inference, Automatic Differentiation Variational Inference (ADVI) \citep{Phan2019-br,Bingham2019-aa,Kucukelbir2017-ir}. 
This allows fast inference over hundreds of latent variables.
We estimate the effect of $\ab_\star=\mathbf{1}$ by computing the posterior mean and credible intervals of $\beta^x \gamma$ from the variational approximation. Details are in \Cref{apx:array-sim}.

The GCM provides an estimate of the causal effect (\Cref{fig:array_gcm_inference_example}).
As one comparison, we used a conventional causal model, a Bayesian linear regression, that ignores correlation across rows or columns (CM). This corresponds to setting $\epsilon^x_{\omega_1\diamond} = \epsilon^x_{\diamond\omega_2} = \epsilon^y_{\omega_1\diamond} = \epsilon^y_{\diamond\omega_2} = 0$.
As another comparison, we used a geometric causal model with the wrong causal graph, assuming $\xb$ is a confounder rather than a mediator (GM).
The GCM with the correct causal structure provides the estimate with the least error (\Cref{fig:array_gcm_inference_example}). This performance advantage held on average across 20 simulation configurations (\Cref{fig:array_gcm_avg_error}). The GCM also had better calibrated frequentist uncertainty, though calibration was poor in absolute terms, likely due to the mean-field posterior approximation (\Cref{fig:array_gcm_coverage}) \citep{Wang2019-ne}.
In sum, GCMs can produce accurate causal effect estimates by accounting for complex non-i.i.d. structure in array data.

\begin{figure}[t]
	\centering
	\begin{subfigure}[t]{0.24\textwidth}
		\centering
		\includegraphics[width=\textwidth]{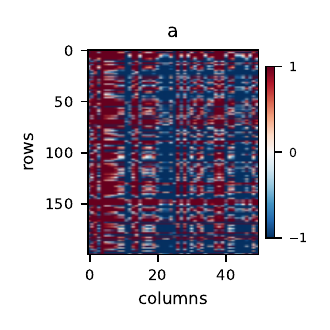}
		\caption{} \label{fig:array_gcm_a}
	\end{subfigure}
	\begin{subfigure}[t]{0.24\textwidth}
		\centering
		\includegraphics[width=\textwidth]{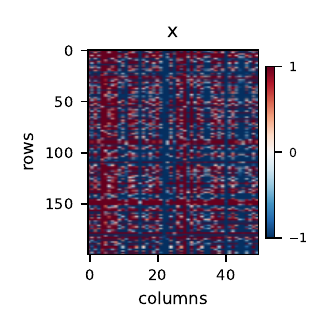}
		\caption{} \label{fig:array_gcm_x}
	\end{subfigure}
	\begin{subfigure}[t]{0.24\textwidth}
		\centering
		\includegraphics[width=\textwidth]{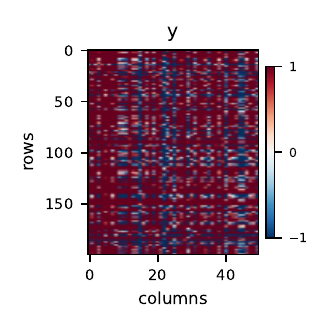}
		\caption{} \label{fig:array_gcm_y}
	\end{subfigure}
	\begin{subfigure}[t]{0.24\textwidth}
		\centering
		\includegraphics[width=\textwidth]{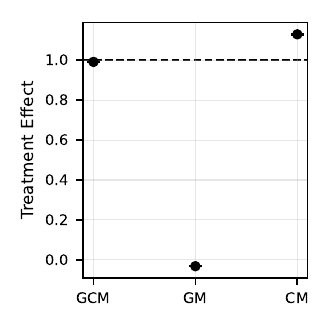}
		\caption{} \label{fig:array_gcm_inference_example}
	\end{subfigure}
	\caption{\textbf{Array GCM simulation study.} (a,b,c) Array data, with different numbers of rows and columns, consisting of the treatment $\ab$ (a), mediator $\xb$ (b) and outcome $\yb$ (c). (d) Estimated treatment effect, including posterior mean and credible interval (2.5th-97.5th). GCM: geometric causal model, GM: geometric model (incorrect causal correction), CM: conventional causal model.}\label{fig:array_gcm}
\end{figure}

\paragraph{Variations.} GCMs can also describe many variations and extensions of this array data example. 
For graph data, the row $i$ and column $j$ of an interaction matrix both correspond to the same node, so a natural symmetry is the group of joint permutations of the rows and columns, $\phi_\pi \in \Sb: \phi_\pi(\omega) = (\pi(\omega_1), \pi(\omega_2))$. Distributions invariant to this symmetry are called \textit{jointly exchangeable} \citep{Orbanz2015-fm}.
Other datasets are grouped or hierarchical, with datapoints collected from different environments \citep{Peters2016-qg,Weinstein2026-jl}.
Here, environments are exchangeable, and subunits are exchangeable within each environment, so the distribution is symmetric under the
transformation $\phi_{\pi, \pi^1 \pi^2 \ldots}(\omega) = \pi(\omega_1) \pi^{\omega_1}(\omega_2)$, for any sequence of permutations $\pi, \pi^1 \pi^2 \ldots$.
All these symmetries generalize to higher dimensions, to work with tensor data, indexing with $\Omega = (\Nb \cup \diamond)^d$ \citep{Kallenberg2005-gv}.
In this example we used linear models for estimation, but models can also be parameterized more flexibly, introducing richer matrix factorizations of the noise and incorporating nonlinearities into the causal mechanisms.

\section{Theory} \label{sec:theory}

We develop the theory of inference in geometric causal models.
First, we show that GCMs are \emph{ergodic}, so we can learn their distribution (\Cref{sec:ergodic}). 
Next we show post-intervention distributions can be identified from the observational distribution via do-calculus (\Cref{sec:do-calc}).
Finally, we show that causal effects can be estimated consistently (\Cref{sec:estimation}).

\paragraph{Generalized setting.} Our theory covers a more general setting than that in \Cref{sec:gcm}, where
the exposition focused on index transformations. GCMs extend to arbitrary group actions. This includes, for example, rotation groups, which occur widely in chemistry and molecular modeling \citep{Kohler2020-sx,Mallet2021-wt,Batatia2022-dp}.
\begin{definition}[GSCM without \Cref{asm:index_transform}]
A GSCM (\Cref{def:GSCM}) with a group $(\Gb, \cdot)$ that does not satisfy \Cref{asm:index_transform} must additionally satisfy that $\pr(\epsb^v)$ is invariant for $v \in [V]$.
\end{definition}
\noindent The added invariance condition ensures the GCM's joint distribution is invariant. 
For groups that only contain index transformations, this invariance is automatic, since any i.i.d. noise distribution $\pr(\epsb^v)$ is invariant to any index transformation.

As an example of a more general group, consider the infinite orthogonal group $\Ob$, which acts on sequences, $\Omega =\Nb$. Beyond just permutations, it contains all rotations and reflections. Formally, $\Ob= \cup_{n = 1}^\infty \Ob(n)$, where $\Ob(n)$ is the $n$-dimensional orthogonal group. Each element $\phi_{R_n} \in \Ob$ acts on $(x_1, x_2, \ldots)$ by rotating/reflecting $x_1, \ldots, x_n$ according to an orthogonal matrix $R_n$, while leaving $x_{n+1:\infty}$ unchanged: $\phi(x_{1:\infty}) = ((R_n \cdot x_{1:n}^\top)_{1:n}, x_{n+1:\infty})$.
Freedman's theorem implies that any distribution $\pr(\epsb^v)$ that is i.i.d. and invariant to this rotation group must be mean zero Gaussian, i.e. $\epsilon_\omega \sim \mathrm{Normal}(0, \sigma)$.

\subsection{Ergodicity} \label{sec:ergodic}

GCMs describe data with complex dependencies, where observations are not drawn i.i.d. 
However, we can still learn the underlying distribution because it is \textit{ergodic}. 
Here we review ergodicity under group actions; the better known concepts of ergodicity in Markov chains, or in dynamical systems, are special cases. 
Then, we show that the distribution over endogenous variables in a GCM is ergodic.

We have so far defined distributions on $\Xc^\Omega$ that are invariant under $\Gb$ (\Cref{def:invariant_dist}).
Let $\Pc_\Gb$ denote the set of all invariant distributions.
Ergodic distributions are a special type of invariant distribution that can be learned from a single sample.
Notationally, for a set $B$ and transformation $\phi \in \Gb$, define $\phi B \triangleq \{\phi(b): b \in B\}$, the transformation applied to each element.
A set $B$ is \textit{invariant} if it remains unchanged under any $\phi \in \Gb$ \citep{Sarig2023-bz}.
\begin{definition}[Ergodic distribution; App. A.1, \citep{Kallenberg2005-gv}, Chap. 1.4, \citep{Sarig2023-bz}] \label{def:ergodic}
A distribution $\pr(\xb)$ is ergodic with respect to $\Gb$ if it is invariant and $\pr(\xb \in B) \in \{0, 1\}$ for all invariant sets $B\subseteq \Xc^\Omega$.
\end{definition}
\noindent Intuitively, ergodic distributions can be learned from a single sample because they put all their mass uniformly over one invariant set. 
Once we see a sample, $\xb\sim\pr(\xb)$, we know it must be in the invariant set, $B_0 = \{\phi(\xb): \phi \in \Gb\}$, and then ergodicity implies $\pr(\xb \in B_0) = 1$.
Since ergodic distributions are invariant, $\pr(\xb)$ must be uniform over $B_0$: $\pr(\phi(\xb)) = \pr(\xb)$ for all $\xb \in B_0$ and $\phi \in \Gb$.
So from a single sample, $\xb \sim \pr(\xb)$, we can pin down $\pr(\xb)$ as the uniform distribution over all transformations of $\xb$.

We will show GCMs are ergodic.
First, we establish their noise is ergodic.
Then, we apply the fact that ergodicity is preserved under equivariant transformations \citep[Lemma A1.1]{Kallenberg2005-gv}. As a result, the causal mechanisms must generate ergodic distributions.

To establish the ergodicity of the noise, we assume 
the domain $\Omega$ is infinite and that the group's action can mix around the elements of the domain. 
Intuitively, this ensures the model generates enough diverse examples to learn the i.i.d. noise distribution nonparametrically. 
For simplicity, the condition below assumes discrete domains $\Omega$ and groups $\Gb$;
for continuous spaces $\Omega = \Rb^d$ and the group of shifts $\Gb = \Tb^d$, Maruyama's theorem guarantees ergodicity~\citep{Krishnapur2014-qs,Dym1976-ei}.
\begin{assumption}[Mixing action on $\Omega$] \label{asm:mix}
	Let $\Omega$ and $\Gb$ be countable and infinite. Assume there exists a sequence of index transforms $\phi_n \in \Gb$ such that for any finite subset of the domain, $S \subset \Omega$, $|S| < \infty$, we have $|\phi_n (S) \cap S| \to 0$ as $n \to \infty$.
\end{assumption}
\noindent Note not all elements of $\Gb$ need to be index transforms, just a subset. To illustrate the condition, we check the group of permutations, defined formally as $\Sb = \cup_{n=1}^\infty \Sb(n)$, where $\Sb(n)$ is the group of permutations that act on the first $n$ elements of the sequence.
Choose $\phi_n \in \Sb$ to be the permutation that swaps the first $n$ positions with the second $n$ positions, $\phi_n(\omega) = \omega + n \Ib(\omega \le n) - n \mathbb{I}(n < \omega \le 2n)$.
Then $|\phi_n (S) \cap S| = 0 $ for all $n \ge \max_{\omega \in S} \omega$.
Likewise for arrays, we can choose $\phi_n$ as the permutation that swaps the first $n$ rows and the first $n$ columns with the second $n$ rows and second $n$ columns.
Overall, the intuition behind \Cref{asm:mix} is that no element of $\Omega$ is privileged: we can always use transformations in the group $\Gb$ to map $\omega$ to an infinite number of other elements.

With this assumption in place we can show GCMs are ergodic.
\begin{proposition}[GCMs are ergodic] \label{thm:gcm_ergodic} Let \Cref{asm:mix} hold for the action of $\Gb$ on each $\epsb^v$ and $\xb^v$ for $v \in [V]$. Then the distribution $\pr(\xb^\Cc)$ for any $\Cc \subseteq [V]$ is ergodic.
\end{proposition}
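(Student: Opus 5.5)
The plan has two steps, following the outline the paper gives just before \Cref{asm:mix}. First, show that the joint exogenous noise $\epsb = (\epsb^1,\ldots,\epsb^V)$ is ergodic under $\Gb$. Then push ergodicity through the equivariant mechanisms, and finally through a coordinate projection to reach $\pr(\xb^\Cc)$.

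\textbf{Step 1 (the noise is ergodic).} The law of $\epsb$ is a product of i.i.d.\ coordinates over $\Omega$, in each noise component. It is invariant: for index transforms this is automatic, and in the generalized setting it is assumed. To show the zero-one property, take an invariant set $B$ and approximate it by a cylinder set. Since $\Omega$ is countable, the product $\sigma$-algebra is generated by finite-dimensional cylinders. So for any $\delta>0$ there is a cylinder set $C$, depending only on the coordinates in some finite $S\subset\Omega$, with $\pr(B\,\triangle\, C)<\delta$.

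Take the index transforms $\phi_n$ from \Cref{asm:mix}. The set $\phi_n C$ depends only on the coordinates $\phi_n(S)$. Invariance of $B$ and of the measure gives
\[
\pr(B\,\triangle\,\phi_n C)=\pr(\phi_n B\,\triangle\,\phi_n C)=\pr(B\,\triangle\, C)<\delta .
\]
Because $|\phi_n(S)\cap S|\to 0$ and these are integer cardinalities, for large $n$ the index sets $S$ and $\phi_n(S)$ are disjoint. The i.i.d.\ structure then makes $C$ and $\phi_n C$ independent, so $\pr(C\cap\phi_n C)=\pr(C)^2$. Putting these together,
\[
\pr(B)\approx \pr(B\cap B)\approx \pr(C\cap \phi_n C)=\pr(C)^2\approx \pr(B)^2 ,
\]
with each approximation carrying an error of order $\delta$. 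Letting $\delta\to 0$ gives $\pr(B)=\pr(B)^2$, so $\pr(B)\in\{0,1\}$. This is the Hewitt--Savage/Kolmogorov mixing argument. It needs the action to be measurable and invariance of $B$ to hold exactly, which I would assume.

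\textbf{Step 2 (push forward through the mechanisms).} Composing the mechanisms along a topological order of $\Gc$ gives a map $F:\epsb\mapsto(\xb^1,\ldots,\xb^V)$. Each $\f^v$ is equivariant, and $\xb^{\pa(v)}$ is by induction an equivariant function of $\epsb$, so $F$ is equivariant: $F(\phi(\epsb))=\phi(F(\epsb))$. By \citep[Lemma A1.1]{Kallenberg2005-gv}, the pushforward of an ergodic measure under a measurable equivariant map is ergodic. Directly: if $B$ is invariant, then $F^{-1}(B)$ is invariant, so $\pr(F^{-1}B)\in\{0,1\}$.

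The coordinate projection $\xb\mapsto\xb^\Cc$ is also equivariant, since the group acts componentwise on each $\xb^v$. Applying the same lemma once more gives ergodicity of $\pr(\xb^\Cc)$, including the case of marginalizing hidden confounders.

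The main obstacle is Step 1 when the group contains non-index transforms. Ergodicity is defined relative to all of $\Gb$, but only the subsequence $\phi_n$ is used. This is fine: a set invariant under all of $\Gb$ is in particular invariant under the $\phi_n$, and the argument uses only those. One also needs the measurability of $F$ and of the action, which I would take as standing assumptions. Requiring \Cref{asm:mix} on each $\xb^v$ is then only needed so that the group acts measurably and componentwise on the endogenous spaces.
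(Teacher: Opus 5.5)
Your proof is correct and follows the paper's own two-step route. The paper first gets a zero-one law for the i.i.d.\ noise from the mixing sequence $\phi_n$, then pushes ergodicity forward through the equivariant composite map $\F^\Cc$ using Kallenberg's Lemma A1.1. The only difference is that you prove the zero-one law yourself by cylinder-set approximation, applied to the joint noise $\epsb^\Vc$ at once, whereas the paper cites Thm.~1 of Ayach et al.\ for it.
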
 
\noindent The proof is given in \Cref{apx:proof_ergodic}. Taking $\Cc = \Vc_\obs$, we find $\pr(\xb^{\Vc_\obs})$ is ergodic, and so we can learn the observational distribution of a GCM from a single sample.

\subsection{Identification} \label{sec:do-calc}

We can learn the observational distribution of a GCM. Now, we identify the effect of an intervention, computing it from the observational distribution and its conditionals. 

In GCMs, as with conventional causal models, we can identify with do-calculus.
\begin{proposition}[do-calculus in GCGMs] \label{thm:docalculus}
Consider the effect of an intervention $\rmdo(\ab = \ab_\star)$ on $\yb$, where $\yb$ and $\ab$ may be individual variables or, more generally, non-overlapping subsets of $\xb^\Vc$. 
If the causal effect $\pr(\yb \s \rmdo(\ab = \ab_\star))$ is identified by do-calculus on the graph $\Gc$, it can be computed as a functional of the joint distribution $\pr(\xb^{\Vc_\obs})$ and its conditionals, $\pr(\yb \s \rmdo(\ab = \ab_\star)) = \psi(\pr(\xb^{\Vc_\obs}))$, with $\psi(\cdot)$ the identification formula from do-calculus.
\end{proposition}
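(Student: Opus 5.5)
The plan is to treat each structured variable $\xb^v$ as a single random element of the measurable space $(\Xc^v)^\Omega$. The GCGM of \Cref{def:GCGM} is then an ordinary causal Bayesian network on $V$ nodes whose ``values'' happen to be infinite-dimensional. Do-calculus is a purely graphical calculus: its soundness uses only three ingredients.
\begin{enumerate}
\item Each variable is generated by its own mechanism given its parents, with independent exogenous noise.
\item The joint distribution therefore factorizes according to $\Gc$ (the Markov property), and the intervened model is the truncated factorization.
\item d-separation in $\Gc$ (and in the mutilated graphs $\Gc_{\overline{\ab}}$, $\Gc_{\underline{\ab}}$, etc.) implies conditional independence.
\end{enumerate}
None of these refers to the internal structure of $\xb^v$, to i.i.d. sampling, or to symmetry. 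So I would prove the proposition by checking each ingredient for GCGMs and then invoking the standard soundness theorem of do-calculus \citep{Pearl2009-fh} verbatim.

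Concretely, the steps are as follows.

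First, establish the Markov factorization. Because the noises $\epsb^1,\dots,\epsb^V$ are mutually independent, and each $\xb^v=\f^v(\xb^{\pa(v)},\epsb^v)$ is measurable, the joint law of $\xb^{[V]}$ is the product of the kernels $\Prob(\xb^v\in\cdot\mid\xb^{\pa(v)})$ from \Cref{eqn:gcgm_mechanism}. I would show this by induction along a topological order of $\Gc$, using Fubini.

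Second, note that a hard intervention (\Cref{def:intervene}) replaces the kernel of $\ab$ with a point mass at $\ab_\star$. This yields the truncated factorization, which is again a Bayesian network on $\Gc_{\overline{\ab}}$. The same holds for the auxiliary ``intervention-node'' augmented graph used in Pearl's proof.

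Third, prove the global Markov property (d-separation $\Rightarrow$ conditional independence) for Bayesian networks over general standard Borel spaces. I would cite the known result for arbitrary measurable state spaces \citep{Lauritzen2001-vi}. Alternatively, I would re-derive it from the recursive structural representation, since $\xb^v$ is a measurable function of its parents and independent noise.

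Fourth, with these in hand, each of the three do-calculus rules follows by the identical argument as in Pearl. A derivation that expresses $\pr(\yb\s\rmdo(\ab=\ab_\star))$ through conditionals of observed margins therefore yields a functional $\psi$ of $\pr(\xb^{\Vc_\obs})$. Marginalizing latent nodes (e.g. $\ub$) is handled as in the standard setting.

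The main obstacle is measure-theoretic rather than causal. The variables live in infinite product spaces, in some cases uncountable ones such as $\Rb^2$ for spatial processes. We need regular conditional distributions to exist so that the conditionals appearing in $\psi$ are well defined, and the rules of do-calculus manipulate densities that may not exist. I would handle this in three ways:
\begin{itemize}
\item Phrase everything with Markov kernels instead of densities.
\item Assume each $(\Xc^v)^\Omega$ is (or embeds measurably into) a standard Borel space, e.g. by restricting to separable or continuous sample paths, which guarantees disintegrations.
\item Interpret the conditionals in $\psi$ as versions defined $\pr$-almost everywhere, with positivity handled separately, as the paper does later via \Cref{asm:finite_depend}.
\end{itemize}
Symmetry plays no role in this proposition. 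Its role is only to make $\pr(\xb^{\Vc_\obs})$ learnable via \Cref{thm:gcm_ergodic}.
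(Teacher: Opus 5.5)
Your proposal is correct and follows essentially the paper's route. The paper's proof observes that a GCGM factorizes as $\pr(\xb^\Vc)=\prod_{v\in\Vc}\pr(\xb^v\mid\xb^{\pa(v)})$ along $\Gc$, treating each structured variable as a single node, and then invokes the soundness result of \citet{Shpitser2006-jg} (their Thm.~5); your extra steps (truncated factorization, the global Markov property over general state spaces, kernels and almost-everywhere versions of conditionals) spell out measure-theoretic details that the paper's two-line proof leaves implicit.
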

\noindent The proof rests on the fact that GCGMs follow a Markov decomposition according to their causal graph, just like conventional causal models (\Cref{apx:proof_do}).

\Cref{thm:docalculus} tells us that if an effect is identified by do-calculus, it is identified in a GCM. 
We now consider the reverse direction: if an effect is identified in a GCM, can we compute it by do-calculus? 
For conventional causal models, the answer is yes: do-calculus is \textit{complete}~\citep{Shpitser2006-jg}. For GCMs, the answer depends on the structure of the group action: so long as the group transforms structured variables just by index transforms, and we do not add any restrictions on the causal mechanisms, do-calculus is complete.
\begin{proposition} \label{thm:docalculus_complete}
	Assume every $\phi \in \Gb$ is an index transform (\Cref{asm:index_transform}). Then if an effect $\pr(\yb \s \rmdo(\ab = \ab_\star))$ can be computed as a functional of $\pr(\xb^{\Vc_\obs})$, it can be computed by do-calculus. 
\end{proposition}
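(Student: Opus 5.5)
We prove the contrapositive, reducing to the completeness of do-calculus for conventional causal models \citep{Shpitser2006-jg}. Suppose $\pr(\yb \s \rmdo(\ab = \ab_\star))$ is not identified by do-calculus on $\Gc$. Shpitser and Pearl's completeness result then gives two conventional (i.i.d., single-unit) SCMs $M_1, M_2$ on the graph $\Gc$. They have mechanisms $g^v_i(x^{\pa(v)}, \epsilon^v)$ and noise distributions $\pr_i(\epsilon^v)$, they induce the same observational law $\pr_1(x^{\Vc_\obs}) = \pr_2(x^{\Vc_\obs})$, and they have different interventional laws $\pr_1(y \s \rmdo(a = a_\star)) \neq \pr_2(y \s \rmdo(a=a_\star))$ for some value $a_\star$. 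The hedge construction gives such models for scalar (unit-level) variables. Our task is to lift them to GSCMs that remain indistinguishable observationally but differ under some structured intervention $\ab_\star$.

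The lift is unitwise. For $i=1,2$, define the GSCM with the same noise distributions, drawn i.i.d. over $\omega \in \Omega$, and the mechanisms
\begin{equation*}
\f^v_i(\xb^{\pa(v)}, \epsb^v)_\omega = g^v_i(x^{\pa(v)}_\omega, \epsilon^v_\omega).
\end{equation*}
By \Cref{asm:index_transform}, every $\phi\in\Gb$ acts by $(\phi(\xb))_\omega = x_{\phi^{-1}(\omega)}$. Hence
\begin{equation*}
\f^v_i(\phi(\xb^{\pa(v)}), \phi(\epsb^v))_\omega = g^v_i\bigl(x^{\pa(v)}_{\phi^{-1}(\omega)}, \epsilon^v_{\phi^{-1}(\omega)}\bigr) = \phi(\f^v_i(\xb^{\pa(v)},\epsb^v))_\omega ,
\end{equation*}
so each mechanism is equivariant and each lifted model is a valid GSCM on $\Gc$ (\Cref{def:GSCM}). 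This is exactly where the index-transform assumption is used. For non-index actions, such as rotations, unitwise mechanisms need not be equivariant, and the noise must additionally be invariant, which may restrict the model class; this is consistent with the statement's hypothesis. Because the noise is i.i.d. across $\omega$ and every mechanism acts coordinatewise, the joint law of the lifted model $i$ is the product over $\omega$ of copies of $M_i$'s joint. Its observational marginal is therefore $\prod_\omega \pr_i(x^{\Vc_\obs}_\omega)$, and these laws coincide for $i=1,2$. (For uncountable $\Omega$ such as $\Rb^d$ we interpret the product via the Kolmogorov extension on finite-dimensional marginals; the same identity holds on every cylinder set.)

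Under $\rmdo(\ab=\ab_\star)$ with the constant intervention $a_{\star\omega}=a_\star$, the replaced mechanism is also coordinatewise. The post-intervention law is then $\prod_\omega \pr_i(y_\omega \s \rmdo(a=a_\star))$. Its marginal at any single $\omega$ differs between $i=1$ and $i=2$, so the structured interventional laws differ. We therefore have two GCMs compatible with $\Gc$ and $\Gb$ that share $\pr(\xb^{\Vc_\obs})$ but have different $\pr(\yb\s\rmdo(\ab=\ab_\star))$. No functional $\psi$ of the observational distribution can compute the effect, which proves the contrapositive.

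The main obstacle is matching the notion of ``identified'' correctly. The statement quantifies over the class of all GCMs with graph $\Gc$ and group $\Gb$, and completeness requires only that \emph{some} pair in this class is observationally equivalent but interventionally distinct; unitwise models belong to this class, so the lift suffices. We must also handle interventions specified as structured $\ab_\star$ when the non-identifiability claim concerns a general $\ab_\star$. If non-identifiability is required for every $\ab_\star$, we use the same product construction with $a_{\star\omega}$ varying over $\omega$. Choosing a unit where $a_{\star\omega}$ equals a witness value from the hedge requires that the Shpitser--Pearl counterexample can be taken to differ at the relevant treatment values. We would cite the strengthened form of their result, in which the hedge models differ for all (or generic) $a_\star$, and otherwise restrict the claim to interventions whose coordinates hit a witness value. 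A secondary technical point is the measurability of the product construction on uncountable $\Omega$, which is routine.
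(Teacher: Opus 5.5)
Your proposal is correct and is the paper's argument: the paper notes that conventional (unitwise, conditionally i.i.d.) models form a subclass $\Pc_{\Gc} \subseteq \Pc_{\Gb,\Gc}$ of GCMs, because index transforms only permute coordinates, and then invokes Shpitser--Pearl completeness; your explicit lift of the counterexample pair is the same step written out. The paper does not address your concern about non-constant $\ab_\star$ at all, and you can close it without a strengthened citation by relabeling. Take a bijection of the treatment's state space sending the witness value to any prescribed $a_{\star\omega_0}$, compose it into the treatment's mechanism, and invert it in the children's mechanisms; this keeps the pair observationally equivalent and moves the discrepancy to that coordinate.
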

\noindent The proof is in \Cref{apx:proof_do_complete}.

\begin{figure}[t]
\centering
\begin{tikzpicture}

\node[obs]                               (y) {$\yb$};
  \node[obs, left=.4cm of y] (a) {$\ab$};
  \node[obs, left=.4cm of a] (z) {$\zb$};
  \node[latent, above=.5cm of a, xshift=.5cm] (u) {$\ub$};

\edge {u} {y} ;
  \edge {u}{a};
  \edge{a}{y};
  \edge{z}{a};

\plate{in} {(y)(a)(z)(u)} {$\Omega=\Nb,\Gb=\Ob$};

\end{tikzpicture}
\caption{Instrumental variable (IV) GCM.} \label{fig:iv_orthogonal}
\end{figure}

For group with actions that are not index transforms, do-calculus can be incomplete. 
For example, consider the GCM in \Cref{fig:iv_orthogonal}, which has an instrumental variable (IV) graph. The causal effect $\pr(\yb \s \rmdo(\ab_\star))$ is not identified by do-calculus \citep{Pearl2009-fh}. 
But the GCM respects the orthogonal group.
Now, the average effect becomes identified, under regularity conditions.
\begin{proposition} \label{thm:iv-id} 
	Consider the GCM in \Cref{fig:iv_orthogonal}. Assume the instrument $\zb$ is not constant, i.e. $\pr(\zb)$ is not a delta function, and the instrument is relevant, meaning it has a non-zero effect on the treatment $\ab$: $\mathbb{E}_\pr[A_\omega \s \rmdo(\zb_\star')] - \mathbb{E}_\pr[A_\omega \s \rmdo(\zb_\star)] \neq 0$ for $\zb_\star,\zb_\star' \in \Zc^\Omega, \omega \in \Omega$. Further assume interference is finite (\Cref{asm:finite_depend}, below). Then, the effect $\Eb[\Yb \s \rmdo(\ab_\star)]$ is identified.
\end{proposition}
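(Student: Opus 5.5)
The plan is to use orthogonal equivariance to force the mechanisms to be linear in the Gaussian noise in a way that makes the outcome equation additive in $\ub$. From there, a classical IV moment argument identifies the mean structural function of $\yb$ in $\ab$.

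\emph{Step 1: structure of the noise.} Freedman's theorem (recalled above) says that every i.i.d., $\Ob$-invariant noise distribution $\pr(\epsb^v)$ is mean-zero Gaussian. So $\epsb^\zb,\epsb^\ub,\epsb^\ab,\epsb^\yb$ are i.i.d.\ $\mathrm{Normal}(0,\sigma_v)$ sequences.

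\emph{Step 2: structure of the mechanisms.} Next I will combine equivariance of $\f^\ub,\f^\ab,\f^\yb$ under $\Ob$ with finite interference (\Cref{asm:finite_depend}). Each coordinate $(\f^\yb)_\omega$ depends on only finitely many coordinates of $(\ab,\ub,\epsb^\yb)$. It is also equivariant under the embedded permutations, which lie in $\Ob$. Under rotations mixing finitely many coordinates, locality is preserved only if the mechanism commutes with all orthogonal maps on each finite block. I then plan to argue, by the standard characterization of $\Ob(n)$-equivariant maps, that the mechanism must act coordinatewise as a map of the form $x\mapsto h(\text{invariants})\,x$, which forces $\f^\yb$ to be linear in its inputs up to scalars. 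The invariants here are $\Ob$-invariant functionals, and by ergodicity (\Cref{thm:gcm_ergodic}) they are almost surely constant. Concretely, I aim to conclude
\[
\yb=\beta\,\ab+\gamma\,\ub+\sigma^\yb\epsb^\yb,\qquad \ab=\alpha\,\zb+\delta\,\ub+\sigma^\ab\epsb^\ab,
\]
with scalar coefficients, when these are applied to typical (a.s.) inputs. This step, which reduces $\Ob$-equivariance plus finite interference to coordinatewise scalar-linear mechanisms, is the main obstacle. Nonlinear equivariant maps such as $x\mapsto x\,g(\|x\|^2)$ exist, so the argument must use locality to show that the norm-type invariant, which depends on infinitely many coordinates, is constant. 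I would try this first via Lindenstrauss's ergodic theorem applied to $n^{-1}\|x_{1:n}\|^2$.

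\emph{Step 3: identification.} Given the linear form, $\ub$ is independent of $\zb$ because $\zb$ has no parents. Relevance gives $\alpha\neq0$, and non-constancy of $\zb$ gives variability in $\zb_\omega$. Then
\[
\beta=\frac{\Cov(Y_\omega,Z_\omega)}{\Cov(A_\omega,Z_\omega)},
\]
since $\Cov(U_\omega,Z_\omega)=0$. The denominator $\Cov(A_\omega,Z_\omega)$ equals $\alpha\mathrm{Var}(Z_\omega)\neq0$. Both covariances are functionals of $\pr(\zb,\ab,\yb)$. Finally, $\Eb[\Yb\s\rmdo(\ab_\star)]=\beta\ab_\star+\gamma\Eb[\ub]=\beta\ab_\star$, because Gaussian noise pushed through a linear equivariant map has mean zero. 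This expresses the effect as a functional of the observational distribution, which proves identification.
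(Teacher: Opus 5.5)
\textbf{Gap in Step 2: linearity must hold at the intervened input, not almost surely.} Your Steps 1 and 3 coincide with the paper's proof: Freedman's theorem for the noise, then the ratio $\Cov(Y_\omega,Z_\omega)/\Cov(A_\omega,Z_\omega)$, with relevance giving $\alpha\neq0$ and non-constancy giving $\mathrm{Var}(Z_\omega)>0$. Step 2, however, has a genuine gap. An ergodic argument such as Lindenstrauss's theorem applied to $n^{-1}\|x_{1:n}\|^2$ can only show that the norm-type quantity takes its limiting value on \emph{typical} inputs. That is why you end with linearity ``when applied to typical (a.s.) inputs.'' The block invariants $\|x_{1:n}\|$ are also only $\Ob(n)$-invariant, not $\Ob$-invariant, so \Cref{thm:gcm_ergodic} does not apply to them directly.

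Step 3 evaluates $\f^\yb$ at the intervened value $\ab_\star$. This is a fixed sequence such as $\mathbf{1}$, or one with finitely many nonzero entries, and its empirical second moment generally differs from $\Eb[A_\omega^2]$. An almost-sure statement under the observational law says nothing there. Concretely, take $\f^\yb(\ab,\ub,\epsb^\yb)=\beta(L(\ab))\,\ab+\gamma\,\ub+\epsb^\yb$ with $L(\ab)=\limsup_n n^{-1}\sum_{i\le n}a_i^2$. It is $\Ob$-equivariant, since finitary rotations do not change $L$. It agrees almost surely with the linear mechanism of slope $\beta(\Eb[A_\omega^2])$, so it has the same observational law. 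Yet $\Eb[\Yb\s\rmdo(\ab_\star)]=\beta(L(\ab_\star))\,\ab_\star$ is not determined by that law. Ergodicity cannot rule this out. Only finite interference does, and it must be used for every input, not almost surely.

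\textbf{The missing idea: use locality deterministically, as the paper does.} For a permutation $\phi\in\Sb\subset\Ob$ fixing $\omega$, equivariance gives $\f(\phi(\xb))_\omega=\f(\xb)_\omega$. So if $\f(\cdot)_\omega$ depended on some coordinate $\omega'\neq\omega$, it would depend on every $\omega''\neq\omega$, contradicting \Cref{asm:finite_depend}. Hence $\f(\xb)_\omega=\g(x_\omega)$ exactly, for all inputs. A reflection of one coordinate then gives $\g(0)=0$. The rotation $\phi_\theta\in\Ob(2)$ applied to $(a,0,0,\ldots)$ gives $\g(a\cos\theta)=\g(a)\cos\theta$ for all $a,\theta$, so $\g$ is linear on all of $\Rb$. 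With several inputs one uses, e.g., $\xb=(a,0,\ldots)$ and $\zb=(0,b,0,\ldots)$ to get $\g(x,z)=m^x x+m^z z$.

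This also avoids the multi-input version of your block characterization, whose coefficients depend on the full Gram matrix, including cross terms like $\langle a_{1:n},u_{1:n}\rangle$. If you want to keep your route, the repair is the same in spirit. Locality says $\f(\xb)_1$ is unchanged when coordinates outside $D_1$ vary, and it is this, not ergodicity, that forces the block coefficients to be genuine constants. Once linearity holds for every input, your Step 3 goes through as written.
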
 
\noindent The proof is in \Cref{apx:proof-iv}. 
The idea is that under $\Ob$, the ergodic distributions must be i.i.d. Gaussian (Freedman's theorem), and the equivariant mechanisms must be linear \citep{Freedman1962-vh,Freedman1963-to,Orbanz2015-fm}. Causal effects are identified in linear-Gaussian IV models, when the instrument is relevant and not constant \citep{Bowden1985-ot,Pearl2009-fh}. 

This result illustrates how symmetry can give rise to causal identification.
The infinite orthogonal group encompasses the permutation group as a subgroup, $\Sb \subset \Ob$.
The causal effect is not identified under $\Sb$, but is identified under $\Ob$.
So using the same causal graph but stronger symmetries, i.e., larger groups $\Gb$, can enable causal identification.

\subsection{Estimation} \label{sec:estimation}

We next estimate causal effects from finite data.
We show that when a causal effect is identified in a GCM, we can estimate it \textit{consistently}, approximating the truth arbitrarily well as we gather more data. 
We study estimates as we observe $\xb^\obs \sim \pr(\xb^{\Vc_\obs})$ for increasing subsets of the domain, i.e., $x^\obs_{A_1}, x^\obs_{A_2}, \ldots$ for $A_n \subset \Omega$. 
For example, $A_n = \{1, \ldots, n\}$ when $\Omega = \Nb$.
To construct an estimator from this data, we average over a subset of transformations $G_n \subset \Gb$.
To simplify the presentation, we assume here $\Omega$ and $\Gb$ are countable and $\Xc$ is finite.

To enable positivity, we assume \textit{finite interference}, so variables at each index $\omega$ can only be causally impacted by a finite number of other units $\omega' \neq \omega$~\citep[Example I.1.9]{Shields1996-xl}.
\begin{assumption}[Finite interference] \label{asm:finite_depend}
	There exist subsets $D^v_\omega \subset \Omega$ with $|D^v_\omega| < \infty$ such that for all $v \in \Vc$,
	\begin{equation}
		\f^v(\xb^{\pa(v)}, \epsb^v)_\omega = \f^v(\tilde \xb^{\pa(v)}, \tilde \epsb^v)_\omega 
	\end{equation}
	when $\xb^{\pa(v)}_{D^v_\omega} = \tilde \xb^{\pa(v)}_{D^v_\omega}$ and $\epsb^v_{D^v_\omega} = \tilde \epsb^v_{D^v_\omega}$.
\end{assumption}
\noindent This assumption lets us learn conditional distribution such as $\pr(\yb \mid \ab_\star)$. 
Since $\pr(\ab)$ is ergodic, the probability of an intervention $\pr(\ab_\star)$ must be zero whenever the intervention $\ab_\star$ is not a mere transform $\phi(\ab)$ of the observation $\ab$.
By restricting interference so that $\yb$ only depends on a finite section of $\ab$, we can learn about the conditional by observing local sliding windows $D^y_\omega$. On these local windows, positivity can be satisfied, i.e. we can have $\pr(\ab_{\star D^y_\omega}) > 0$ without $\ab_\star = \phi(\ab)$.

To construct an estimator, we take an average of the data over a subset of transformations in the group $\Gb$. 
We assume that these transformations also act locally on the structured variables, and do not bring in information from infinitely far away.
To establish convergence, we also assume the group $\Gb$ contains an increasing set of transformations $G_n$ which grow smoothly with $n$ \citep{Lindenstrauss2001-sp}.
Notationally, when $\phi$ is not an index transform, we 
use $\phi^{-1}(S)$ to denote the indices that contribute to $x_S$, i.e. choose $\phi^{-1}(S)$ to be a subset of $\Omega$ such that $\phi(\xb)_S = \phi(\xb')_S$ whenever $x_{\phi^{-1}(S)} = x'_{\phi^{-1}(S)}$.
For example, if $\phi \in \Ob$ rotates $x_{1:n}$, then $\phi^{-1}([m]) = [n]$ for $m \le n$. \begin{assumption}[Amenable group with finite tempered sequence and local information] \label{asm:finite-tempered}
	Assume there exists a sequence of subsets $G_1, G_2, \ldots$, with $G_n \subset \Gb$, such that,
	\begin{enumerate}
		\item The sequence is F\o lner: for all $\phi \in \Gb$,
	\begin{equation}
		\frac{|(\phi G_n) \cap G_n|}{|G_n|} \xrightarrow{n \to \infty} 1.
	\end{equation}
	\item The sequence is tempered:
\begin{equation}
	\bigg| \underset{k < n} \bigcup G_k^{-1} G_n \bigg| \le c |G_n|
\end{equation} 
\item Transformations bring local information:  for any finite $S \subset \Omega$, we have $|\phi^{-1}(S)| < \infty$ for all $\phi \in G_n$.
	\end{enumerate}
\end{assumption}
\noindent For example, consider the group of permutations $\Sb$ and let $G_n \subset \Sb$ be the set of all permutations that modify at most the first $n$ elements. 
Now, each element $\phi \in \Sb$ is in $G_n$ for $n$ sufficiently large, so we will have $|\phi G_n \cap G_n| = 0$ eventually, and so $G_n$ is a F\o lner sequence.
Further, $G^{-1}_k G_n = G_n$ for all $k < n$, since the set of permutations of the first $k$ elements is a subset of permutations of $n$. So $G_n$ is tempered. 
Finally, every element $\phi \in \Sb$ only modifies a finite number of indices of $\xb$, so transformations in $G_n$ bring in only local information.
For arrays, we can choose $G_n$ to be the set of all permutations of the first $n$ rows and of the first $n$ columns.
For spatial data, we can choose $G_n$ to be all shifts up to a maximum distance $n$.

With these conditions in place, we show that consistent estimation is possible from finite data. The key tool is Lindenstrauss's ergodic theorem~\citep{Lindenstrauss2001-sp}, which extends the law of large numbers to ergodic distributions and group actions.
We use \Cref{asm:finite-tempered}.3 to ensure its estimator can be computed from finite data. 
\begin{proposition}[Lindenstrauss' theorem for finite data] \label{thm:lindenstrauss}
	Let \Cref{asm:finite-tempered} hold, assume $\Omega$ and $\Gb$ are countable and $\Xc$ is finite.
	Assume $\xb \sim \pr(\xb)$ for $\pr(\xb)$ ergodic, and define the estimator 
	\begin{equation} \label{eqn:lindenstrauss}
		q_{n}(x_{A_n}) \triangleq \frac{1}{|G_n|} \sum_{\phi \in G_n} \mathbb{I}[\phi(\xb)_S = \tilde x_S],
	\end{equation}
	where $A_n \triangleq \cup_{\phi \in G_n} \phi^{-1}(S)$ is guaranteed to be finite, $|A_n| < \infty$, for any finite $S \subset \Omega$.
	Then we have $q_n(x_{A_n}) \xrightarrow{n\to\infty} \pr(x_S = \tilde x_S)$ a.s..
\end{proposition}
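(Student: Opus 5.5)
Apply Lindenstrauss' pointwise ergodic theorem to the indicator $h(\xb) \triangleq \mathbb{I}[x_S = \tilde x_S]$, then check that the resulting estimator only uses finitely many coordinates. Since $\Xc$ is finite and $S$ is finite, $h$ is bounded and measurable, so $h \in L^1(\pr)$.

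The first step is to rewrite the estimator. By definition $h(\phi(\xb)) = \mathbb{I}[\phi(\xb)_S = \tilde x_S]$, so \Cref{eqn:lindenstrauss} is exactly the ergodic average $\frac{1}{|G_n|}\sum_{\phi \in G_n} h(\phi(\xb))$. Lindenstrauss' theorem states the following. Let a countable amenable group act measurably and measure-preservingly on a probability space, let $(G_n)$ be a tempered F\o lner sequence, and let $h \in L^1$. Then the averages converge almost surely to $\Eb[h \mid \mathcal{I}]$, where $\mathcal{I}$ is the invariant $\sigma$-algebra.

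The hypotheses hold as follows.
\begin{itemize}
\item Measure preservation is the invariance of $\pr$, which is part of the definition of ergodic (\Cref{def:ergodic}).
\item The F\o lner and tempered conditions are \Cref{asm:finite-tempered}.1--2.
\item Lindenstrauss' theorem is usually stated for averages $\frac{1}{|G_n|}\sum_{g\in G_n} h(g\cdot \xb)$ with a left action. If the convention of the action conflicts with the F\o lner condition as stated (left-multiplication $\phi G_n$), I would replace $G_n$ by $G_n^{-1}$ or use the right-action version. The paper's conditions are phrased to match the standard statement, so this is bookkeeping about conventions, not a substantive gap.
\end{itemize}

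Ergodicity then identifies the limit. Every $\mathcal{I}$-measurable set is invariant, so it has probability $0$ or $1$. Hence $\Eb[h\mid\mathcal{I}] = \Eb[h] = \pr(x_S = \tilde x_S)$ almost surely. Strictly, invariant sets in \Cref{def:ergodic} should be read modulo null sets, or one notes that $\mathcal{I}$ is generated by strictly invariant sets. For countable groups the two notions agree: given an almost-invariant set $B$, the set $\bigcap_{\phi}\phi B$ is strictly invariant and differs from $B$ by a null set, because there are only countably many $\phi$.

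The last step shows the estimator depends only on $x_{A_n}$ with $A_n$ finite. By the definition of $\phi^{-1}(S)$, the quantity $\phi(\xb)_S$ is determined by $x_{\phi^{-1}(S)}$. So each summand is a function of $x_{\phi^{-1}(S)}$, and the whole average is a function of $x_{A_n}$ with $A_n = \cup_{\phi\in G_n}\phi^{-1}(S)$. Finiteness of $A_n$ needs each $\phi^{-1}(S)$ finite, which is \Cref{asm:finite-tempered}.3, together with $G_n$ finite. $G_n$ is finite because the F\o lner ratio has $|G_n|$ in the denominator, so $G_n$ must be a finite nonempty set; I would state this as implicit in the assumption. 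A finite union of finite sets is finite.

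I expect the main obstacle to be the careful matching of the left/right action conventions and the invariant $\sigma$-algebra subtleties with the exact form of Lindenstrauss' theorem. The approximation argument itself is immediate because $h$ is a bounded indicator.
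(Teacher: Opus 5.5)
Your proposal is correct and follows the paper's proof. The paper likewise checks that each summand is determined by $x_{\phi^{-1}(S)}$ with $\phi^{-1}(S)\subseteq A_n$ and $A_n$ a finite union of finite sets, and then simply cites Theorem~1.1 of Lindenstrauss. Your identification of the limit as $\Eb[h]$ via ergodicity, including the almost- versus strictly-invariant sets remark, makes explicit a step the paper leaves implicit.

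One caution: your fallback of replacing $G_n$ by $G_n^{-1}$ is not pure bookkeeping, because it changes both the estimator in the statement and the tempered condition. The paper's proof relies on the same matching of left/right conventions without comment, so this is not a gap relative to its argument.
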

\noindent The proof is in \Cref{apx:proof_linden}. Note that it is possible to compute the estimator $q_n(x_{A_n})$ even when we do not observe all of $\xb$, but just observe a finite dataset $x_{A_n}$. The reason is that each term $(\phi(\xb))_S$ in the sum in \Cref{eqn:lindenstrauss} depends just on the value of $\xb$ within $A_n$, a finite subset of $\Omega$.  

We can now estimate causal effects in GCMs from finite data.
Specifically, we can consistently estimate $\pr(y_S \s \rmdo(\ab_\star))$, the marginal of $\pr(\yb \s \rmdo(\ab_\star))$, for any arbitrarily large finite subset of the domain, $S \subset \Omega$. 
The final key assumption is positivity, which ensures that different treatments have a non-zero probability of occurring.
\begin{proposition} \label{thm:estimate_id}
	Let \Cref{asm:mix}, \Cref{asm:finite_depend} and \Cref{asm:finite-tempered} hold, and $\Xc^v$ is finite for $v \in \Vc$. 
	Assume the causal effect is identified by do-calculus. Assume positivity: $\pr(x^{\Vc_\obs}_I = \tilde x^{\Vc_\obs}_I) > 0$ for all $\tilde x^{\Vc_\obs}_I$ and finite $I \subset \Omega$. 
	Choose any finite $S \subset \Omega$ and let $A_n \triangleq \cup_{\phi \in G_n} \phi^{-1}(S)$. 
	There exist estimators $q_n(x^{\Vc_\obs}_{A_n} \s \rmdo(\ab_\star))$ such that $q_n(x^{\Vc_\obs}_{A_n} \s \rmdo(\ab_\star)) \xrightarrow{n\to\infty} \pr(y_S = \tilde y_S \s \rmdo(\ab_\star))$ a.s..
\end{proposition}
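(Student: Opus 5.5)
The plan is to combine three earlier results. \Cref{thm:docalculus} reduces the causal effect to a functional of the observational distribution. \Cref{asm:finite_depend} makes that functional depend only on finitely many finite-dimensional marginals. \Cref{thm:lindenstrauss}, justified by the ergodicity in \Cref{thm:gcm_ergodic}, estimates each such marginal consistently from $x^{\Vc_\obs}_{A_n}$. The continuous mapping theorem then finishes the argument.

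\emph{Step 1: localize the target.} Because the effect is identified by do-calculus, \Cref{thm:docalculus} gives $\pr(\yb \s \rmdo(\ab_\star)) = \psi(\pr(\xb^{\Vc_\obs}))$. Here $\psi$ is built from marginals, conditionals, products, sums over variables, and ratios of the observational law. Marginalizing to $y_S$, I will use \Cref{asm:finite_depend} to show that only a finite index set $I \supseteq S$ matters. To get $I$, iterate the finite dependence sets $D^v_\omega$ backwards through the DAG $\Gc$, starting from $S$; the graph is finite and each step adds finitely many indices. After this truncation, $\pr(y_S = \tilde y_S \s \rmdo(\ab_\star))$ equals a fixed formula $\psi_S$ in the finitely many probabilities $\{\pr(x^{\Vc_\obs}_I = \tilde x_I) : \tilde x_I\}$. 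The finite alphabets $\Xc^v$ make this set finite, and sums over latent configurations become finite sums.

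\emph{Step 2: check regularity.} The formula $\psi_S$ is a rational function of these cell probabilities. By the positivity assumption every cell probability is strictly positive, so every conditional appearing in $\psi_S$ has a nonzero denominator. Hence $\psi_S$ is continuous at the true point.

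\emph{Step 3: estimate and conclude.} By \Cref{thm:gcm_ergodic} (using \Cref{asm:mix}), $\pr(\xb^{\Vc_\obs})$ is ergodic. Apply \Cref{thm:lindenstrauss} to the joint observed process, with $I$ in place of $S$, to each of the finitely many cells $\tilde x_I$. Each resulting estimator $q_n$ depends only on the finite window $\cup_{\phi\in G_n}\phi^{-1}(I)$ and converges a.s. to $\pr(x^{\Vc_\obs}_I=\tilde x_I)$. A finite intersection of probability-one events still has probability one, so all cells converge simultaneously a.s. Plugging these into $\psi_S$ gives $q_n(\cdot \s \rmdo(\ab_\star))$, and continuity yields a.s. convergence to $\pr(y_S = \tilde y_S \s \rmdo(\ab_\star))$.

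One bookkeeping point: the statement uses $A_n$ built from $S$, while the estimator uses the window built from $I$. I will either enlarge $S$ to $I$ from the outset, which the statement permits since $S$ is arbitrary, or note that $I$ is a finite neighborhood of $S$ covered by the same construction.

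The main obstacle is Step 1. I must justify that the do-calculus functional for the full infinite structured variables reduces, for the marginal on $y_S$, to a finite-dimensional formula in the finite-window observational marginals. The issue is most delicate when $\psi$ involves integrals over latent or intermediate variables, as in front-door-type formulas. My first attempt is to argue at the level of the GSCM: under $\rmdo(\ab_\star)$, $y_S$ is a measurable function of the noise and interventions on a finite backward-closed window. I would then show that each Markov factor used by do-calculus, restricted to that window, equals the corresponding finite-window conditional of $\pr(\xb^{\Vc_\obs})$. This reduces the argument to standard do-calculus on a finite unrolled DAG over indices.
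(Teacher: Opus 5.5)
Your proposal is correct and follows the paper's proof essentially step for step. The paper builds the finite windows $I^v$ by the same backward recursion through the sets $D^v_\omega$ starting from $I^y = S$. It then replaces the GCM by a conventional model $\tilde\Mc$ on the same graph $\Gc$, whose nodes are the windowed variables $x^v_{I^v}$. Keeping $\Gc$ with grouped nodes, rather than an index-level unrolled DAG, is what lets the do-calculus hypothesis transfer verbatim. From there the paper uses positivity, ergodicity, \Cref{thm:lindenstrauss} and continuous mapping exactly as you do.

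On your bookkeeping point, the paper silently estimates the $I^{\Vc_\obs}$-marginals from $x^{\Vc_\obs}_{A_n}$, with $A_n$ built from $S$. Your first fix (replacing $S$ by $I$) does not close this, because the window required for target $I$ is larger again. What does work is averaging only over $\{\phi\in G_n : \phi^{-1}(I)\subseteq A_n\}$. When $I\subseteq\bigcup_{\psi\in K}\psi^{-1}(S)$ for a finite $K\subset\Gb$ (e.g.\ shifts or permutations), the F\o lner property gives this set density tending to one in $G_n$.
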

\noindent The proof, which constructs $q_n(x^{\Vc_\obs}_{A_n}\s \rmdo(\ab_\star))$ based on \Cref{thm:lindenstrauss}, is in \Cref{apx:proof_estimate_id}.
This result shows that causal inference in GCMs is possible. It holds for general causal graphs and general symmetries, and confirms consistent estimators can be computed from finite data.

Beyond consistency, we also can expect asymptotic normality, since the central limit theorem extends to ergodic distributions in this setting \citep{Austern2018-dy}.
This provides variance estimates and confidence intervals. 

\section{Application: Causal Inference Along the Genome} \label{sec:application}

We illustrate GCMs with an application to genomics. 
We are interested in understanding how variation in the human genome sequence impacts its biological activity.

\begin{figure}[t]
\centering
\begin{subfigure}[t]{0.4\textwidth}
		\centering
		\includegraphics[width=\textwidth]{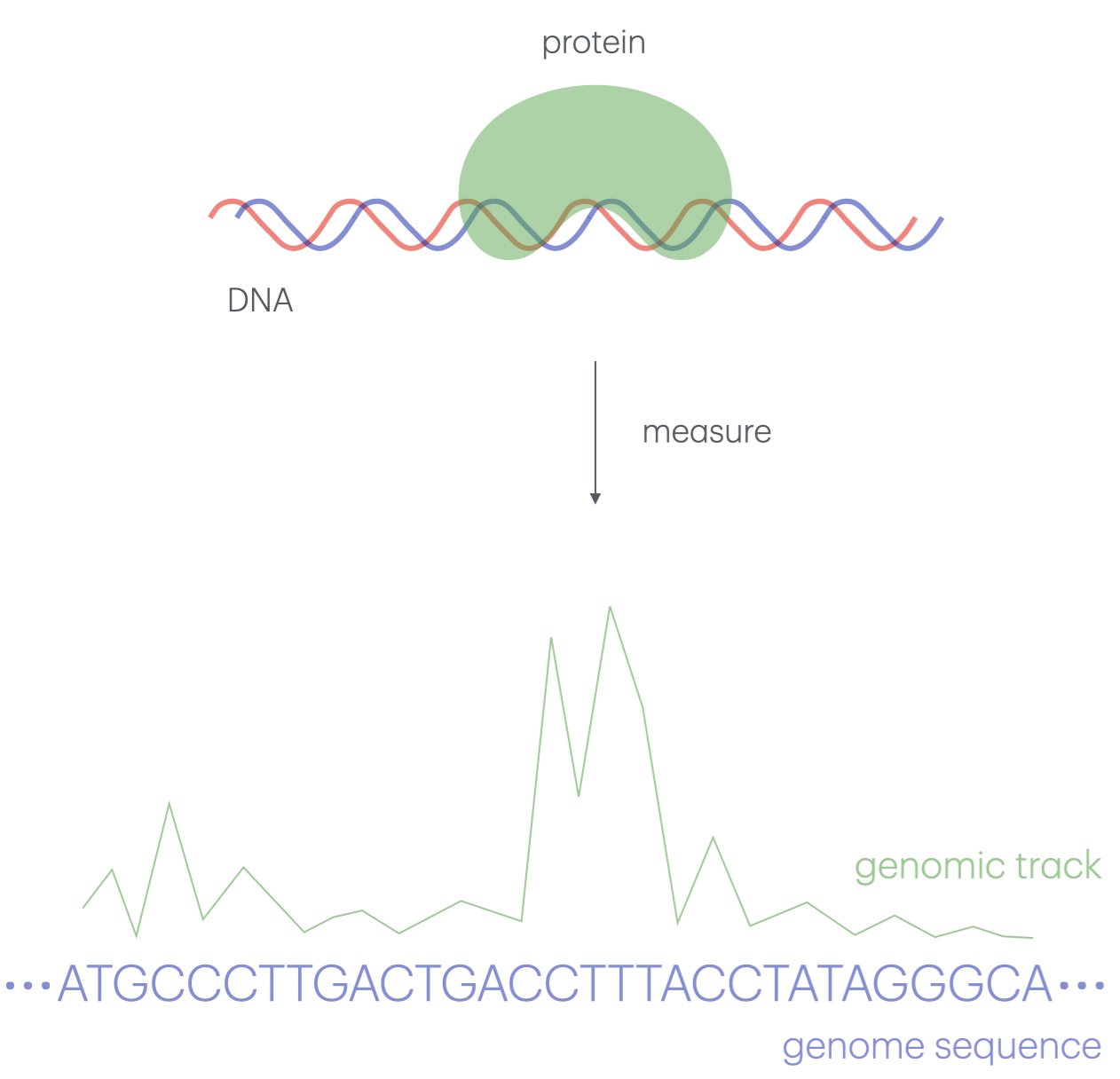}
		\caption{} \label{fig:track-viz}
\end{subfigure}
\begin{subfigure}[t]{0.4\textwidth}
		\centering
		\includegraphics[width=0.8\textwidth]{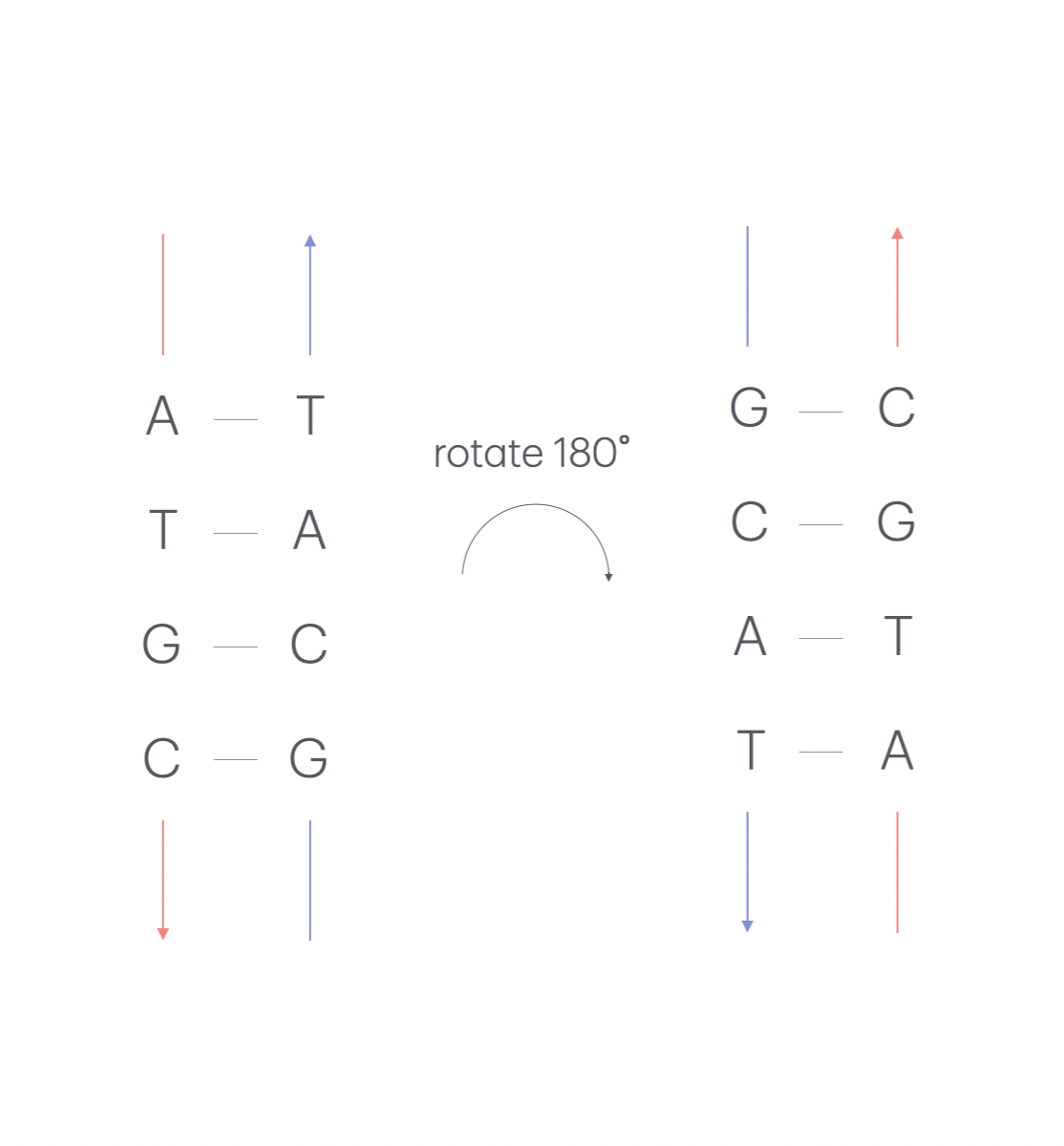}
		\caption{} \label{fig:rc-viz}
\end{subfigure}
\caption{\textbf{Functional genomics data.} (a) Molecular technologies record biological activity at different locations in the human genome, such as chromatin binding or transcription factor binding. The resulting data is a \textit{genomic track}. (b) DNA is a double helix, so a sequence such as ATGC is always paired with its \textit{reverse complement} GCAT on the second strand. Whether we write the sequence as ATGC or as GCAT, they are the same molecule, rotated $180^\circ$.}
\end{figure}

We hope to draw inferences from observational data collected at different points along the genome. This type of data is called a \textit{genomic track} (\Cref{fig:track-viz}).
It is distinct from that used in e.g. genome-wide association studies (GWAS), where there is one scalar outcome per genome \citep{Uffelmann2021-fr,Walker2022-lo}.
Advances in molecular technology enable more fine-grained measurements along a single genome, disaggregating organism-level measurements \citep{Johnson2007-cf,Buenrostro2013-gy}.
We hope to draw causal inferences from the patterns of biological activity these technologies reveal.

We introduce a GCM and causal estimand for functional genomics, then introduce candidate estimators.
The first ends up essentially equivalent to existing methods used by deep learning models of genomic track data. Our derivation from a GCM provides theoretical understanding of how and why these existing methods work.
The second estimation method is distinct, and derived by applying ideas from heterogenous treatment effect estimation to the GCM.
Our experiments on semisynthetic data suggest it can produce improved estimates of variant's effects.

\subsection{A GCM for Functional Genomics}

\paragraph{Model.} We introduce a GCM for functional genomic data (\Cref{fig:genomic-gcm}). The treatment consists of a genome sequence $\ab$. At each position $\omega \in \Omega = \Zb$ is a nucleotide, $a_\omega = \{A,T,G,C\}$. 
The outcome $\yb$ is a measurement of activity along the genome, such as chromatin binding, epigenetic modification, or gene expression. At each position is a scalar, $y_\omega \in \Rb$. 
We are interested in the impact that changes in the genome sequence have on activity, to understand, diagnose or treat human disease \citep{Martin-Geary2025-eb,Avsec2026-tq,Dadush2024-ss}.

To draw causal inferences from a single genome, we consider the underlying symmetries \citep{Mallet2021-wt}.
First, we assume \textit{translation} equivariance: the causal mechanisms are unchanged across different locations in the genome. 
The whole genome is in the nucleus, exposed to essentially the same chemical environment, and the laws of chemistry do not change with location.
Second, we consider \textit{reverse complement} equivariance: the causal mechanisms are unchanged with respect to the orientation of the DNA (\Cref{fig:rc-viz}) \citep{Shrikumar2017-vm,Zhou2022-od}.
This symmetry stems from the double helix structure of the DNA molecule. A strand of DNA with the sequence ATGC always comes with its pair GCAT oriented in the opposite direction, since A forms hydrogen bonds with T and G with C. So a molecule of double stranded DNA with the sequence ATGC is atomically identical to one with GCAT, and the laws of chemistry do not treat them any differently.

Formally, the full symmetry group is denoted by the product $\Gb= \Zb \rtimes \Zb_2$,
 and has elements $\phi_{\tau,s}$ indexed by $\tau \in \Zb$ and $s \in \{-1, +1\}$ \citep{Mallet2021-wt}. 
 Each element acts on $\ab$ as $\phi_{\tau,s}(\ab)_\omega = \sigma_s(a_{s(\omega - \tau)})$, where $\sigma_{+1}$ is the identity and $\sigma_{-1}$ returns each base's complement: $\sigma_{-1}(A) = T, \sigma_{-1}(T) = A, \sigma_{-1}(G) = C, \sigma_{-1}(C) = G$.
The action of $\phi_{\tau,s}$ on $\yb$ is translation, $\phi_{\tau,-1}(\yb)_\omega = \phi_{\tau,+1}(\yb)_\omega = y_{\omega -\tau}$, where we assume the activity data is \textit{unstranded}, i.e. not specific to one or the other DNA strand. 
We assume the causal mechanisms generating functional genomics data are symmetric under $\Gb$.

\paragraph{Estimand.}
Geneticists are interested in the impact of changes in the human genome. 
Given the tiny differences among human genomes, it is standard to ask about the effects of changing only a single letter, i.e. a \textit{single nucleotide polymorphism}.
We consider the conditional average treatment effect (CATE),
\begin{equation}
	\textsc{CATE}_{\wt \to \star}(\ab_{-0}) \triangleq \Eb[Y_{\Nc(0)} \mid \rmdo(a_0 = a_\star), \ab_{-0}] - \Eb[Y_{\Nc(0)} \mid \rmdo(a_0 = a_\wt), \ab_{-0}],
\end{equation}
where $\ab_{-0}$ is a DNA sequence minus the zeroth position, $(\ldots, a_{-2}, a_{-1}, a_{1}, a_2,\ldots)$, and $\Nc(0) = \{\omega: |\omega| < \delta\}$ is a neighborhood around zero.
This effect measures the change in average outcome when we replace the \textit{reference} value $a_\wt$ with the \textit{variant} $a_\star$ in the context of the DNA sequence $\ab_{-0}$.

Reading off \Cref{fig:genomic-gcm}, the causal effect is identified from the observational distribution as,
\begin{equation} \label{eqn:cate-genomic}
	\textsc{CATE}_{\wt \to \star}(\ab_{-0}) = \Eb[Y_{\Nc(0)} \mid a_0 = a_\star, \ab_{-0}] - \Eb[Y_{\Nc(0)} \mid a_0 = a_\wt, \ab_{-0}].
\end{equation}
Formally, the GCM symmetry group $\Zb \rtimes \Zb_2$ satisfies the assumptions of \Cref{thm:estimate_id}, since translations are a subgroup. Assuming finite interference, so the outcome $Y_{\Nc(0)}$ cannot depend on values of $\omega$ arbitrarily far away along the genome, \Cref{thm:estimate_id} will be  satisfied.

\begin{figure}[t]
\centering
\begin{tikzpicture}

\node[obs]                               (y) {$\yb$};
  \node[obs, left=.6cm of y] (a) {$\ab$};

\edge {a} {y} ;

\plate{in} {(y)(a)} {$\Zb, \Zb \rtimes \Zb_2$};

\end{tikzpicture}
\caption{\textbf{A GCM for functional genomics}. The treatment $\ab$ is the genome sequence, and the outcome $\yb$ is a genomic track recording biological activity. The index set $\Zb$ is discrete nucleotide positions in the genome, and the symmetry is the group of discrete 1D translations and reverse complements.} \label{fig:genomic-gcm}
\end{figure}

\subsection{Estimating Variant Effects}
\paragraph{Estimates from outcome models.} Given finite data, $\{a_{1:n}, y_{1:n}\}$, we can estimate the treatment effect as follows. First, we fit an equivariant model $\hat{\mu}(\ab) \approx \Eb[Y_{\Nc(0)} \mid \ab]$ that predicts $\yb$ from $\ab$. Then, we compute 
\begin{equation} \label{eqn:cate-out}
	\textsc{CATE}_{\wt \to \star}(\ab_{-0}) \approx \hat{\mu}(a_0 = a_\star, \ab_{-0}) - \hat{\mu}(a_0 = a_\wt, \ab_{-0}).
\end{equation}

This causal estimator describes existing algorithms.
Deep functional genomics models such as AlphaGenome, Borzoi, Enformer, SpliceAI or BPNet are not based on explicitly causal or statistical models, assumptions or estimands \citep{Jaganathan2019-yn,Avsec2021-xm,Avsec2021-gj,Linder2025-yq,Avsec2026-tq}. However, they implement \Cref{eqn:cate-out} algorithmically.
Neural networks are trained to predict one or many different outcomes $\yb$ (genomic tracks) given a reference genome $\ab$, based on a sliding context window. They use translation equivariant architectures such as convolutions or transformers, and impose exact or approximate reverse complement equivariance through architecture constraints or data augmentation.
To predict the effects of a variant, they feed in the variant and its surrounding sequence context, and compare the predicted activity to that of a reference (\Cref{eqn:cate-out}).

These methods predict the effects of interventions, generalizing from observations to interventions, according to multiple lines of clinical and laboratory evidence \citep{Landrum2018-jt,Avsec2026-tq}.
For example, models trained on genomic track data from a single reference genome provide variant effect estimates that predict disease in patients with unseen variants, and design synthetic DNA sequences that produce a specific outcome in laboratory experiments \citep{Martin-Geary2025-eb,Brixi2026-sp}.

GCMs provide a theoretical justification and explanation for these methods' success, in terms of underlying causal and statistical assumptions. They also open directions for extension and improvement. We explore one next.

\paragraph{Estimates with propensity models.} Causal estimates are often improved by incorporating a propensity model, not just an outcome model. 
Technically, the approach employed in \Cref{eqn:cate-out}, and in existing deep learning methods, are GCM analogues of the \textit{S-learner} CATE meta-algorithm: train an outcome model, and then feed through the treatment and control \citep{Kunzel2019-nc}. 
We next consider a GCM analogue of the \textit{R-learner}, which incorporates a propensity model to reduce the estimator's sensitivity to modeling errors and regularization bias \citep{Nie2021-ae}.

The R-learner is based on the \textit{Robinson decomposition}, a reparameterization of a causal model. The purpose of this reparameterization is to reduce estimates' variance and bias \citep{Nie2021-ae}.
We can apply the Robinson decomposition to the GCM (\Cref{apx:robinson}). We focus, for simplicity, on modeling the outcome at one point, $Y_{\Nc(\omega)}=Y_\omega$.  We obtain the probabilistic model,
\begin{align} 
a_\omega &\sim \mathrm{Categorical}(e_\eta(\ab_{-\omega})) \label{eqn:r-propensity}\\
	y_\omega &\sim \mathrm{Normal}(m_\phi(\ab_{-\omega}) + (a_\omega - e_\eta(\ab_{-\omega})) \cdot \tau_\theta(\ab_{-\omega}), \sigma), \label{eqn:r-outcome}
\end{align}
where the nucleotide $a_\omega$ is represented as a one-hot encoded vector, and $e_\eta$, $m_\phi$, and $\tau_\theta$ are parameterized functions. 
$\tau_\theta(\ab_{-\omega})$ models the treatment effect, $\textsc{CATE}_{\wt \to a}(\ab_{-0})$. It outputs a vector with an entry for each nucleotide variant $a$. 
$m_\phi(\ab_{-\omega})$ models the conditional mean outcome $\Eb[Y_\omega \mid \ab_{-\omega}]$, describing the baseline activity level expected from the genomic context.
It outputs a scalar.
$e_\eta(\ab_{-\omega})$ models the propensity $\pr(a_\omega \mid \ab_{-\omega})$, outputting a vector describing the probability of each nucleotide occurring at position $\omega$.
Using this model, we can learn by fitting $\eta,\phi$ and $\theta$ to data, and returning $\tau_{\hat\theta}(\ab_{-0}) \approx \textsc{CATE}_{\wt \to \star}(\ab_{-0})$ as an estimate of the causal effect.

The propensity $e_\eta$ models the probability of nucleotides given their context, which is precisely what is learned by \textit{masked DNA language models} \citep{Benegas2023-dy,Sanabria2023-aw,Sanabria2024-qg,Dalla-Torre2025-vf,Nair2025-tt,Shu2026-bh}.
These deep learning models are trained to predict one or more randomly hidden nucleotides, based on the surrounding sequence context \citep{Devlin2018-ji}.
Masking nucleotide $\omega$, they output an estimate of $\pr(a_\omega = a \mid \ab_{-\omega})$.
Unmasked, generative DNA models, such as autoregressive or diffusion models, can also provide estimates of this probability \citep{Amin2021-yk,Nguyen2023-dp,Brixi2026-sp}. None of these DNA language models were originally motivated by causal inference,  
but our GCM R-learner suggests they may be used to improve causal effect estimates by correcting for a variant's propensity. (Note this is distinct from other ways of using DNA language models for effect prediction, which do not employ any $\yb$ data whatsoever, and instead rely on patterns in genome evolution \citep{Benegas2023-dy,Hopf2017-jm,Robson2025-iq}.) 

\subsection{Semisynthetic Experiments}

\begin{table}
\begin{tabular}{l c c c c c}
 & CM & GCM S & GCM S RC & GCM R & GCM R RC \\ \hline \hline
A: effect MSE & 2.633 (0.002) & 2.083 (0.095) & 1.926 (0.076) & \textbf{1.637 (0.021)} & 2.509 (0.014)\\
\hline
B: effect MSE & 0.191 (0.001) & 0.140 (0.011) & \textbf{0.084 (0.011)} & 0.104 (0.010) & 0.184 (0.001) \\
\end{tabular}
\caption{\textbf{Variant effect estimation on semisynthetic genomic data.} Mean squared error of the CATE estimate, across all positions and variants. Estimators: CM: standard causal model without interference across positions; GCM S RC: genomic GCM using an outcome model alone and reverse complement equivariance; GCM S: without reverse complement equivariance; GCM R RC: genomic GCM using a propensity model and reverse complement equivariance; GCM R:  without reverse complement equivariance. We consider two data generating scenarios, A and B. } \label{tbl:genomic}
\end{table}

\paragraph{Setup} We investigate the behavior of different effect estimators empirically, using semisynthetic data.
For $\ab$, we take a section of the human reference genome, hg38, the first thousand nucleotides of the \textit{TERT} gene. This gene encodes telomerase reverse transcriptase, a protein involved in telomere maintenance and aging \citep{Shim2024-gb}.
We generate $\yb$ synthetically. 
Motivated by the biophysics of protein-DNA interactions, we consider a nonlinear mechanism involving a central sequence motif whose effects are modulated by the broader sequence environment, the GC content \citep{De_Boer2019-cs}:
\begin{align}
	\g(a_{\Nc(0)}) &=  \Ib(a_{-1:2} = \mathrm{GTGC}) \exp(1 + \frac{1}{17}\sum_{j=-8}^8 \Ib(a_{-j} \in \{\mathrm{G}, \mathrm{C}\})), \label{eqn:genomic-test-f}\\
y_\omega &\sim \mathrm{Normal}(\g(a_{\Nc(\omega)}) + \g(\sigma_{-1}(a_{\Nc(\omega)})), \sigma).
\end{align}

We implement different estimators for the CATE. First, we consider a conventional causal model that ignores interference across positions, a linear regression predicting $y_\omega$ from $a_\omega$ (CM).
Next we consider estimates using the GCM S-learner strategy, fitting just an outcome model $y_\omega \sim \mathrm{Normal}(\mu_\theta(a_{\Nc(\omega)}), \sigma)$. 
We consider two variations, one where we only require the outcome model to be translation equivariant (GCM S) and one where we also enforce reverse complement equivariance (GCM S RC).
Finally we consider estimates using the GCM R-learner strategy, incorporating a propensity model (\Cref{eqn:r-propensity,eqn:r-outcome}). 
We consider variations where we require the conditional mean outcome, propensity, and treatment effect models to be translation equivariant (GCM R), and where we additionally enforce reverse complement equivariance (GCM R RC).

We parameterize each of the GCM estimators using neural networks with a single convolutional layer. 
We parameterize RC equivariant models by \textit{conjoining}, taking the average of (a) the output of the model applied to a sliding window, and (b) the output applied to the window's reverse complement \citep{Zhou2022-od}.
We use an outcome model that is misspecified with respect to true outcome, to evaluate estimator performance in the presence of complex biological mechanisms. It has a convolutional kernel of size 3 and neighborhood $\Nc(\omega)$ of size 11, smaller than the four letter motif and neighborhood of size 17 in \Cref{eqn:genomic-test-f}.
We use Bayesian inference for each model, following previous work showing semiparametric reparameterizations like the Robinson decomposition are effective for Bayesian as well as frequentist causal inference \citep{Walker2026-mm,Antonelli2018-fu}.
The models are implemented in NumPyro, and we use automated stochastic variational inference \citep{Phan2019-br,Bingham2019-aa,Kucukelbir2017-ir}.
We evaluate each model's performance at estimating the true $\textsc{CATE}_{T \to a}(\ab_{-\omega})$ at each position $\omega$ in the genome sequence, calculating the mean squared error of the posterior mean treatment effect estimate (the \textit{precision in estimation of heterogenous effects}) \citep{Hill2011-ds}.
We repeated the experiments 20 times, including two different noise levels, $\sigma \in \{0.1, 0.3\}$.

\paragraph{Results} We find the geometric models produce improved causal effect estimates, as compared to a standard causal model that ignores interference across positions (\Cref{tbl:genomic}, line 1).
Moreover, incorporating a propensity score correction via the Robinson decomposition leads to  better performance than using an outcome model alone (GCM R versus GCM S and GCM S RC, line 1).
In this scenario, enforcing reverse complement equivariance in addition to translation equivariance does not necessarily improve effect estimates.
While the outcome modeling approach improves with the additional symmetry, the reparameterized model does not.
This suggests there can be an interaction between causal strategies for improving effect estimation (reparameterization) and geometric strategies (adding symmetries).

Propensity modeling techniques offer particular advantages when the outcome model is misspecified. 
We reran the experiments with an alternative outcome generating process using smaller scale sequence features that the outcome model can capture more easily, replacing \Cref{eqn:genomic-test-f} with,
\begin{equation}
	\g(a_{\Nc(0)}) = \Ib(a_{-1:1} = \mathrm{GTG})(1 +  \Ib(a_{5:7} \in \{\mathrm{ACT}, \mathrm{TCG}, \mathrm{TTG}\})). \label{eqn:genomic-test-f-2}
\end{equation}
In this scenario, outcome modeling outperformed the reparameterized model at causal effect estimation, though only when also enforcing reverse complement equivariance (\Cref{tbl:genomic}, line 2).
Without reverse complement equivariance, the reparameterized GCM performed best (GCM S versus GCM R, line 2).

\paragraph{Limitations and directions} This semisynthetic example is small compared to modern deep learning models for human genomics, which train models with hundreds of millions of parameters on billions of nucleotides and thousands of different genomic tracks \citep{Avsec2026-tq}. 
As in previous studies of CATE estimation, our results do not point to a definitive advantage from propensity correction across all data generating scenarios, but rather advantages in specific settings, such as the presence of complex outcome mechanisms \citep{Nie2021-ae}.
Future work may explore methods to implement propensity corrections at large scale using e.g. DNA language models. 
Although the GCM R-learner does not require adding parameters to the outcome model, it does require retraining the outcome model, since the propensity estimates are used during learning.

GCMs provide a broad bridge to import causal methods and ideas into functional genomics.
One direction is to model autocorrelation in the outcome and not just interference, using per-position latent variables $\epsb^y$. This could improve variant effect predictor's calibration. Another direction is to consider different causal graphs. There may be confounders, such as evolutionary pressures, that affect both the genome and the outcomes \citep{Vilhjalmsson2013-ti,Veller2024-az}.
Or, the genome could be used as an instrument, to understand the effect that one genomic track has on another, e.g. chromatin binding on gene expression \citep{Malina2022-ip}.
Finally, causal representation learning methods might be extended to GCMs to improve our understanding and interpretation of genomes and their function \citep{Scholkopf2021-mc,Li2025-kv}.

\section{Discussion}\label{sec:discussion}

We proposed geometric causal models to draw causal inferences from non-i.i.d. data. Geometric causal models replace the usual isolated causal mechanisms with equivariant mechanisms, which give rise to dependence across units. We developed methods to identify causal effects in GCMs, with different causal graphs and groups, without parametric assumptions on the causal mechanisms. 
We developed estimation methods based on geometric deep learning and scalable Bayesian inference.
These methods enable causal inference methods for new kinds of scientific data, such as functional genomics.

Symmetry is a fundamental theme in mathematical sciences. 
This paper investigated how symmetry can enable scientists to learn about cause and effect. 
The results show how symmetry in a data generating process enables generalization from observations to interventions. 
Besides symmetry, to achieve nonparametric identification we relied on an \textit{infinite} world, i.e. an infinite domain $\Omega$ and an infinite group $\Gb$ that could mix the elements of the domain.
To achieve positivity, we needed a notion of \textit{locality}, where the outcome at each point was driven by finite inputs. 
Overall, the blend of symmetry, an infinite world and locality carried sufficient axiomatic structure for causal inference.

\paragraph{Limitations and assumptions} We sought nonparametric identification and estimation results. 
This generality came at a price.

First, our identification theory holds only for infinite domains $\Omega$ and groups $\Gb$.
But many domains are finite or compact, such as data on a sphere.
GCMs are well-defined on these domains, e.g., on the sphere we can consider mechanisms that are equivariant with respect to rotations. 
But we cannot expect nonparametric identification to hold, since we cannot collect infinite observations. 
Nonetheless, identification could still hold under parametric or semiparametric assumptions that restrict the causal mechanisms.
We may also hope to establish partial or near-identification, since as the size of a domain grows, the GCM may approach ergodicity, in the way that finite exchangeable sequences approach i.i.d. sequences~\citep{Diaconis1980-kj}.

Second, to enable positivity or overlap, we assumed finite interference. 
This condition could be relaxed to allow for e.g. smooth decay in space or among neighbors in a social network, by adding additional parametric or semiparametric assumptions on the mechanisms \citep{Li2022-iw}.

Third, to enable generalization to unseen units, we employed ergodicity. Ergodicity is stronger than some design-based assumptions that have previously been used in non-i.i.d. causal inference~\citep{Bojinov2019-nw,Ogburn2022-iv,Wang2025-in,Cristali2022-cv,Basse2017-je,Savje2021-zy}.
The benefit is it helps enables effect estimates for interventions in unobserved regions of space, time or a network~\citep{Ogburn2022-iv,Cristali2022-cv}.

Another limitation of our approach is that it assumes both the symmetry group and causal graph are known. 
Future work could extend methods for discovering and testing symmetries in data to GCMs, and could extend methods for discovering and testing causal graphs to GCMs \citep{Chiu2023-zs,Chen2026-pq,Zanga2025-yf}.

\paragraph{Outlook} 
GCMs provide a flexible modeling language for causal inference from structured scientific data. 
We have focused on groups involving shifts and permutations, but many other groups are encountered across science, such as crystallographic groups in materials science or Lorentz groups in physics \citep{Adams2023-bu,Chiu2024-og}.
Large scale data collection efforts and machine learning systems are increasingly developed and deployed for scientific applications with diverse symmetries \citep{Kohler2020-sx,Watson2023-sp,Zeni2025-ol,Cornet2025-jv,Kabylda2025-of}.
 Future work on GCMs may use such datasets and models to address causal questions, including correcting for bias, analyzing counterfactuals, uncovering representations or generalizing to new environments.
 
GCMs' broad scope may also help address increasing demand for customized, reliable and automated scientific data analysis and experiment planning.
Probabilistic programming and program synthesis, including natural language-based methods, provide increasingly automated tools for model design, inference, criticism, improvement and deployment \citep{Bingham2019-aa,Karwowski2026-md}.
However, causal inference from structured data has largely remained the domain of bespoke tools and theory.
GCMs provide a unified model design language and basic theoretical guarantees that could help expand the reach of probabilistic programming deeper into causal inference \citep{Tavares2021-dw,Agrawal2024-ke,Hoyt2025-ba,Luedtke2026-eo}.
Future work could explore new tools for implementing, criticizing and iterating on geometric causal models.

\bibliographystyle{abbrvnat}
\bibliography{references}

\appendix
\setcounter{figure}{0}
\renewcommand{\thefigure}{S\arabic{figure}}

\section{Marginalizing GCMs}\label{apx:marginalize}

An important property of GSCMs is that they satisfy the same rules for marginalizing out causal variables as conventional SCMs. 

In particular, consider a causal variable $\yb$ with a single child $\zb$,
\begin{align}
	\epsb^y \sim \pr(\epsb^y) \quad \quad \yb &= \f^y(\xb^{\pa(v)}, \epsb^y),\\
	\epsb^z \sim \pr(\epsb^z) \quad \quad \zb &= \f^z(\yb, \epsb^z)
\end{align} 
We can marginalize out $\yb$ and still obtain a valid GSCM. In particular, we obtain
\begin{align}
	\tilde \epsb^z \sim \pr(\tilde \epsb^z)\quad \quad & \zb \sim \tilde \f^z (\xb^{\pa(v)}, \tilde \epsb^z) \\
	\text{ where } \pr(\tilde \epsb^z) &\triangleq \pr(\epsb^y)\pr(\epsb^z)\\
	\tilde \f^z (\xb^{\pa(v)}, \tilde \epsb^z) &\triangleq \f^z(\f^y(\xb^{\pa(v)}, \epsb^y), \epsb^z)
\end{align}
Since the product of independent invariant distributions is i.i.d., $\pr(\tilde \epsb^z)$ is i.i.d.. Since the composition of equivariant functions is equivariant, $\tilde \f^z$ is equivariant.
So in GSCMs, like in SCMs, we can marginalize out causal variables with a single child. 

Also, as in SCMs, we cannot marginalize out confounders, i.e. causal variables with more than one child, since this gives rise to correlation between the children that is not described by the model.

\section{Proofs}

\subsection{Proof of \Cref{prop:stochastic_mech_equiv}}\label{apx:conditional-equiv}
\begin{proof}
	$\pr(\epsb^v)$ is i.i.d. across $\omega \in \Omega$ (\Cref{def:GSCM}), and the group action $\phi:\Omega \to \Omega$ is a bijection (\Cref{def:transform_group}), so $\pr(\epsb^v)=\prod_{\omega \in \Omega}\pr(\epsb^v_\omega)=\prod_{\omega \in \Omega}\pr(\epsb^v_{\phi^{-1}(\omega)})=\pr(\phi(\epsb^v))$ is invariant. Now,
\begin{align}
	\Prob(\xb^v \in \Xi \mid \xb^{\pa(v)}) &= \Prob(\xb^v \in \Xi \mid \phi^{-1}(\phi(\xb^{\pa(v)})))\\
	&=\int \Ib(\f^v(\phi^{-1}(\phi(\xb^{\pa(v)})), \epsb^v) \in \Xi) \pr(\epsb^v)\di \epsb^v \\
	&= \int \Ib(\f^v(\phi^{-1}(\phi(\xb^{\pa(v)})), \phi^{-1}(\epsb^v)) \in \Xi) \pr(\epsb^v)\di \epsb^v \\
	&= \int \Ib(\phi^{-1}(\f^v(\phi(\xb^{\pa(v)}), \epsb^v)) \in \Xi) \pr(\epsb^v)\di \epsb^v\\
	&= \int \Ib(\f^v(\phi(\xb^{\pa(v)}), \epsb^v) \in \phi(\Xi)) \pr(\epsb^v)\di \epsb^v\\
	&= \Prob(\xb^v \in \phi(\Xi) \mid \phi(\xb^{\pa(v)}))
\end{align}
where for the third line we use the fact that the noise distribution is invariant, and for the fourth line we use the fact that the mechanism $\f^v$ is equivariant.
\end{proof}
\subsection{Proof of \Cref{thm:gcm_ergodic}} \label{apx:proof_ergodic}

The first step of the proof is to show that the exogenous noise distribution in a GCM is ergodic. 
This follows from an analogous argument to that used to show i.i.d. variables are ergodic under the shift group, but now generalizing the domain $\Omega$ and the group $\Gb$ \citep[Chap. 1]{Sarig2023-bz}.
Specifically, we employ Thm. 1 of \citep{Ayach2025-hu}, which generalizes zero-one laws to \textit{positional symmetries}, which correspond in our context to index transforms. 

Let $\pr(\epsb)$ denote the i.i.d. and invariant distribution of the noise, where $\epsilon_\omega \in  \Ec$. Let $B \subseteq \Ec^{\Omega}$ denote a set that is invariant to $\Gb$.
The mixing condition (\Cref{asm:mix}) implies that for any finite $S \subset \Omega$ there is an index transform $\phi_n \in \Gb$ for which $|\phi_n(S) \cap S| = 0$.
Thm. 1 of \citep{Ayach2025-hu} says that this implies $\Pr(\epsb \in B) \in \{0, 1\}$. Since this holds for any invariant set $B$, $\pr(\epsb)$ is ergodic (\Cref{def:ergodic}). 

We now show GCMs are ergodic.
First, since i.i.d. and invariant distributions are ergodic, the joint distribution $\pr(\epsb^\Vc)$ over noise variables is ergodic.
Now, let $\F^\Cc: (\Ec^\Vc)^\Omega \to (\Xc^\Cc)^\Omega$ denote the mapping from exogenous noise variables to endogenous variables $\xb^\Cc$, defined by the GSCM.
That is, $\xb^\Cc = \F^\Cc(\epsb^\Cc)$. 
Since $\F^\Cc$ is a composition of equivariant functions, it must be equivariant.
So the distribution $\pr(\xb^\Cc)$ is the pushforward of an ergodic distribution through an equivariant map,
\begin{align}
	\epsb^\Vc &\sim \pr(\epsb^\Vc)\\
	\xb^\Cc &= \F^\Cc(\epsb^\Vc),
\end{align}
This implies $\pr(\xb^\Cc)$ is ergodic, by Lemma A1.1 in \cite{Kallenberg2005-gv}.
\qed	

\subsection{Proof of \Cref{thm:docalculus}} \label{apx:proof_do}
\begin{proof}
As with conventional causal models, GCMs follow a Markov decomposition according to their causal graph (\Cref{def:GCGM}) :
\begin{equation}
		\pr(\xb^\Vc) = \prod_{v \in \Vc} \pr(\xb^v \mid \xb^{\pa(v)}).
\end{equation}
Thus, we can apply Thm. 5 of \citep{Shpitser2006-jg}.
\end{proof}

\subsection{Proof of \Cref{thm:docalculus_complete}} \label{apx:proof_do_complete}
\begin{proof}
	We need to show that if the effect $\pr(\yb \s \rmdo(\ab = \ab_\star))$ is not identified by do-calculus, it cannot be computed as a functional of $\pr(\xb^{\Vc_\obs})$. That is, there does not exist a functional $\psi(\cdot)$ such that 
	\begin{equation} \label{eqn:functional-do}
		\psi(\pr(\xb^{\Vc_\obs})) = \pr(\yb \s \rmdo(\ab_\star))
	\end{equation} 
	for all $\pr(\xb^\Vc) \in \Pc_{\Gb,\Gc}$, where $\Pc_{\Gb,\Gc}$ denotes the set of all joint distribution that can be produced by a GCM with causal graph $\Gc$ and group $\Gb$.
	
	Let $\Pc_{\Gc}$ denote the set of all joint distributions that can be produced by a conventional causal model with a causal graph $\Gc$. This is the set of joint distributions produced by conditionally i.i.d. causal mechanisms, that is in the notation of GCMs, distributions of the form,
	\begin{equation} \label{eqn:conditional-iid}
		\pr(\xb^v \mid \xb^{\pa(v)}) = \prod_{\omega \in \Omega} \pr(x_\omega^v \mid x_\omega^{\pa(v)}),
	\end{equation}
	for countably infinite $\Omega$.
	This conditional distribution is equivariant under the group of permutations of $\Omega$. 
	Since any index transform $\phi: \Omega \to \Omega$ is a permutation, \Cref{eqn:conditional-iid} must also be equivariant under any group $\Gb$ of index transforms. So $\Pc_{\Gc} \subseteq \Pc_{\Gb, \Gc}$, i.e. the set of joint distributions produced by conventional causal models is a subset of the set of joint distributions produced by GCMs, for any group $\Gb$ that only contains index transforms.
	
	Thm. 6 of \citep{Shpitser2006-jg} shows do-calculus is complete for conventional causal models, meaning that if the effect is not identified by do-calculus, there does not exist a $\psi$ such that \Cref{eqn:functional-do} holds for all $\pr(\xb^\Vc) \in \Pc_{\Gc}$.
	Then, since there is no $\psi$ such that \Cref{eqn:functional-do} holds for all elements of $\Pc_{\Gc}$, and $\Pc_{\Gc} \subseteq \Pc_{\Gb, \Gc}$, there cannot be a $\psi$ that holds for all elements of $\Pc_{\Gb, \Gc}$.	
\end{proof}

\subsection{Proof of \Cref{thm:iv-id}} \label{apx:proof-iv}

\begin{proof}
We first show that the IV model is linear-Gaussian.

To do this, we first show the distribution over each noise variable $\pr(\epsb^v)$ is Gaussian with mean zero.
By construction of the GCM, $\pr(\epsb^v)$ is invariant to $\Ob$. It is ergodic by \Cref{asm:mix}, since we can choose a series of transformations $\phi_n \in \Ob$ that swap the first $n$ elements with the second $n$ (the same series as we chose for $\Sb$ in \Cref{sec:ergodic}). 
	This implies $\pr(\epsilon^v)$ must be Gaussian with mean zero, i.e. $\epsilon_\omega^v \overset{iid}{\sim} \Nc(0, \sigma^v)$, by Freedman's theorem~\citep[Theorem 1.31]{Orbanz2015-fm,Kallenberg2005-gv}.
	
We now show the causal mechanisms are linear. 
We first examine univariate mechanisms $\yb = \f(\xb)$.  
Since the permutation group is a subset of the orthogonal group, $\Sb \subset \Ob$, the mechanisms must be permutation equivariant. Finite interference (\Cref{asm:finite_depend})  implies that any mechanism $\yb = \f(\xb)$ can be written
\begin{equation}
	y_\omega = \g(x_\omega)
\end{equation}
for $\omega \in \Nb$. The reason is if modifying any one $x_{\omega'\neq \omega}$ affected $y_\omega$, then $y_\omega$ would depend on an infinite number of other $x_{\omega' \neq \omega}$, by applying permutations that move $\omega'$ while leaving $\omega$ in place. 

Now let $\Zero = (0)_{\omega \in \Nb}$ and note $\phi(\Zero) = \Zero$ for any rotation $\phi \in \Ob$. From equivariance, we then have
\begin{equation}
	\f(\Zero) = \f(\phi(\Zero)) = \phi(\f(\Zero))
\end{equation}
for any $\phi \in \Ob$. 
Choose $\phi$ to be a rotation such that $\phi(\xb)_1 = -x_1$. Then we have $\g(0) = \f(\Zero)_1 = \phi(\f(\Zero))_1 = - \g(0)$. 
Hence, $\g(0) =0$.

Next, let $\xb = (a, 0, 0,\ldots)$ and note $\f(\xb) = (\g(a), 0, 0, \ldots)$. Choose $\phi_\theta \in \Ob(2) \subset \Ob$ to be a rotation of the first two elements,
\begin{equation}
	\phi = \left(\begin{matrix}
		\cos \theta & -\sin \theta & 0 \ldots\\
		\sin \theta & \cos \theta & 0 \ldots\\
		0 & 0 & 1 \ldots\\
		\vdots&\vdots &\ddots 
	\end{matrix}\right).
\end{equation}
Equivariance, $\f(\phi_\theta(\xb)) = \phi_\theta(\f(\xb))$, then implies
\begin{equation}
	\g(a \cos \theta ) = \g(a) \cos \theta,
\end{equation}
for all $a$ and $\theta$. 
So, $\g$ is linear. We thus have, for some $m \in \Rb$,
\begin{equation}
	y_\omega = m x_\omega
\end{equation}
for all $\omega \in \Nb$. 

Next consider multivariate mechanisms, $\yb = \f(\xb, \zb)$, with equivariance $\f(\phi(\xb), \phi(\zb)) = \phi(\f(\xb, \zb))$. 
From permutation equivariance and finite interference, $y_\omega = g(x_\omega, z_\omega)$, by the argument above.
Setting $\xb = \zb = \Zero$, we find $\g(0, 0) = 0$, by the argument above. Setting $\zb = \Zero$ and $\xb = (a, 0, 0, \ldots)$, we obtain  $\g(x, 0) = m^x x$ by the argument above, and likewise $\g(0, z) = m^z z$. 
Now, let $\xb = (a, 0, 0, \ldots)$ and $\zb = (0, b, 0, 0, \ldots)$. Equivariance then implies,
\begin{equation}
	\f(\phi_\theta(\xb), \phi_\theta(\zb))_2 = \g(a \sin \theta, b \cos \theta) = \sin \theta\, \g(a, 0) + \cos \theta\, \g(0, b) = (\sin \theta) a\, m^x + (\cos \theta) b\, m^z,
\end{equation}
for all $\theta$ and all $a,b \in \Rb$. Thus $\g$ is linear in $(x, z)$, and we have, $\g(x, z) = m^x x + m^z z$. 

The argument extends to more than two inputs, giving $g(x_1, \ldots, x_k) = m_1 x_1 + \ldots +m_k x_k$, by applying rotation matrices in higher dimension, $\phi \in \Ob(k)$.

We now know the IV model must take the linear-Gaussian form,
\begin{align}
	z_\omega &\sim \Nc(0, \sigma^z)\\
	u_\omega &\sim \Nc(0, \sigma^u)\\
	a_\omega &\sim \Nc(m^{za} z_\omega + m^{ua} u_\omega, \sigma^a)\\
	y_\omega &\sim \Nc(m^{ay} a_\omega + m^{uy} u_\omega, \sigma^y),
\end{align}
for $\omega \in \Nb$. We can read off that $\mathbb{E}[Y_\omega; \rmdo(\ab_{\star})] = m^{ay}a_{\star,\omega}$. So to identify the average causal effect, it suffices to identify $m^{ay}$ from $\pr(z,a,y)$. Using the fact that the joint distribution $\pr(z,u,a,y)$ is multivariate normal, we can note,
\begin{equation}
	\frac{\Cov_\pr(z,y)}{\Cov_\pr(z,a)} = \frac{m^{ay} m^{za} (\sigma^z)^2}{m^{za} (\sigma^z)^2} = m^{ay}.
\end{equation}
This is the standard linear IV estimator \citep[Chap. 4]{Angrist2009-ah}. 

Finally we confirm that the denominator is non-zero, so the ID formula is well-defined. Since $\mathbb{E}_\pr[A_\omega \s \rmdo(\zb_\star)] = m^{za}\, z_{\star,\omega}$, the relevance condition implies $m^{za} \neq 0$.
Since the instrument is not a constant, $\sigma^z > 0$. Hence $m^{za} (\sigma^z)^2 > 0$.
\end{proof}

\subsection{Proof of \Cref{thm:lindenstrauss}} \label{apx:proof_linden}
\begin{proof}
We first need to show the estimator is well defined, in the sense that it depends only on the value of $\xb$ at a finite subset of $\Omega$, i.e. the right hand side of \Cref{eqn:lindenstrauss} can be computed from finite data $x_{A_n}$. 
First, note $A_n$ must be finite because it is a finite union of finite sets (by \Cref{asm:finite-tempered}). 
Then, we can see that any $\xb'$ that matches $\xb$ on this domain, $x'_{A_n} = x_{A_n}$, produces the same value of the estimate. The reason is that since $\phi^{-1}(S) \subseteq A_n$, we have $x'_{\phi^{-1}(S)} = x_{\phi^{-1}(S)}$ and hence $\phi(\xb)_S = \phi(\xb')_S$.
 
	The result then follows from Theorem 1.1 in \citep{Lindenstrauss2001-sp}.
\end{proof}

\subsection{Proof of \Cref{thm:estimate_id}} \label{apx:proof_estimate_id}
\begin{proof}
Let $\Mc$ denote the GCM (\Cref{def:GSCM}). Our proof proceeds by constructing a conventional causal graphical model $\tilde \Mc$, for which the effect is the same as in the GCM. 
In $\tilde \Mc$, the identification formula from do-calculus is entirely in terms of \emph{finite} marginals, i.e. it is a functional of $\pr(x^{\Vc_\obs}_I)$ for $|I| < \infty$.
We then estimate the terms of the identification formula using Lindenstrauss' ergodic theorem.

Let $I^v \subset \Omega$ denote the subset of the domain for which modifying $x^v_{I^v}$ affects $y_S$. In particular, if $\yb$ has a parent $\ab$, $I^a = \cup_{\omega \in S} D^y_\omega$. More generally, $I^v$ is nonempty for any ancestor $v$ of $y$ in the causal graph, and defined recursively as $I^v = \cup_{v' \in ch(v)} \cup_{\omega \in I^{v'}} D^{v'}_\omega$, where $ch(v)  \subset \Vc$ denotes the nodes in the causal graph that are children of $v$ and ancestors of $y$, and with the initialization $I^y = S$.
Then, from, \Cref{asm:finite_depend}, $I^v$ must be finite for all $v \in \Vc$.

Consider a conventional causal graphical model $\tilde \Mc$ with the same graph $\Gc$ as the GCM, but with each variable $\xb^v$ replaced by $x^v_{I^v}$, and with the GCGM's mechanisms $\pr(\xb^v \mid \xb^{\pa(v)})$ replaced by their conditionals over the relevant subset of the domain,
\begin{equation}
	x^v_{I^v} \sim \pr(x^v_{I^v} \mid x^{\pa(v)}_{I^{\pa(v)}}).
\end{equation}
Since variables outside $I^{\Vc}$ cannot causally influence $y_S$ (\Cref{asm:finite_depend}), the effect $\pr(y_S \s \rmdo(\ab_\star))$ in the original GCM must be equal to the effect $\pr(y_S \s \rmdo(a_{I^a} = a_{\star,I^a}))$ in this conventional model $\tilde \Mc$.
The graph of $\tilde \Mc$ is the same as $\Mc$, so the causal effect is identified by do-calculus.

Since the effect is identified by do-calculus, we must be able to write it as a continuous functional of conditional distributions of the observational distribution of $\tilde \Mc$, $\pr(x^{\Vc_\obs}_{I^{\Vc_\obs}})$~\citep{Shpitser2006-jg}. So,
\begin{align}
	\pr(y_S \s \rmdo(\ab_{\star})) &= \psi_{\ab_\star}(\pr(x^{\Vc_\obs}_{I^{\Vc_\obs}})) \label{eqn:do-calc-decomp}
\end{align}
for some continuous $\psi_{\ab_\star}$.
The positivity assumption ensures $\pr(x^{\Vc_\obs}_{I^{\Vc_\obs}}) > 0$. The results of \cite{Shpitser2006-jg} show this condition is sufficient to ensure the right hand side of \Cref{eqn:do-calc-decomp} is finite and well-defined.

From \Cref{thm:gcm_ergodic}, $\pr(\xb^{\Vc_\obs})$ must be ergodic. 
We can construct a consistent estimator $q_n(x^{\Vc_\obs}_{A_n})$ of $\pr(x^{\Vc_\obs}_{I^{\Vc_\obs}})$, according to \Cref{thm:lindenstrauss} (\Cref{eqn:lindenstrauss}).
By the continuous mapping theorem, we can plug this into the functional $\psi_{\ab_\star}(\cdot)$ to obtain a consistent estimator of the effect,
\begin{equation}
	q_n(x^{\Vc_\obs}_{A_n} \s \rmdo(\ab_\star)) \triangleq \psi_{\ab_\star}(q_n(x^{\Vc_\obs}_{A_n})) \xrightarrow[n \to \infty]{a.s.} \pr(y_S \s \rmdo( \ab_\star)).
\end{equation}
\end{proof}
\noindent Note the positivity condition in \Cref{thm:estimate_id} is a stronger assumption than strictly necessary, and can be relaxed to only require positivity for variables we condition on~\citep{Shpitser2006-jg}.

\section{Spatial Simulation} \label{apx:spatial-sim}

\begin{figure}
\centering
\begin{subfigure}[t]{0.3\textwidth}
		\centering
		\includegraphics[width=\textwidth]{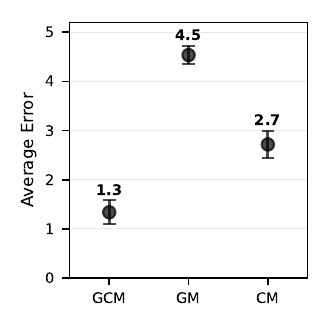}
		\caption{} \label{fig:spatial_gcm_avg_error}
	\end{subfigure}
	\begin{subfigure}[t]{0.3\textwidth}
		\centering
		\includegraphics[width=\textwidth]{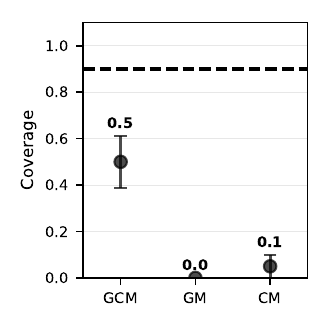}
		\caption{} \label{fig:spatial_gcm_coverage}
	\end{subfigure}	
	\caption{\textbf{Spatial GCM summary} (a) Average error and standard error of the mean across 20 simulations. (b) Fraction of simulations in which the credible interval (5th-95th) covered the true treatment effect, and standard error of the mean. Dashed line indicates expected coverage of 90\%.}
\end{figure}

\paragraph{Data generation.} The data generating process is,
\begin{align}
	\ub &\sim \GPc(\mathbf{0}, \kappa)\\
	\xb &\sim \GPc(100 B \ast \ub, 0.05^2 k)\\
	\ab &\sim \GPc(-100 B \ast \ub, 0.05^2 k)\\
	\yb &\sim \GPc(50 B \ast \ab + 100 B \ast \xb, 0.05^2 k)
\end{align}
where $B_\omega \triangleq \exp(-\frac{1}{2} \left(\frac{\|\omega\|_2}{0.02}\right)^2)$ and $\kappa(\omega, \omega') = \sqrt{\pi} 0.01 \exp\left(-\frac{1}{4} \left(\frac{\|\omega-\omega'\|_2}{0.01}\right)^2\right)\\ = \int \exp\left(-\frac{1}{2}\left(\frac{\|\omega-\tilde \omega\|_2 }{0.01}\right)^2\right) \exp\left(-\frac{1}{2}\left(\frac{\|\omega'-\tilde \omega\|_2 }{0.01}\right)^2\right) d\tilde \omega.$  This process has an interference length scale of $0.02$ and noise autocorrelation length scale of $\sqrt{2} \times 0.01$. 
The treatment effect is, 
\begin{equation}
	\Eb[\yb \s \rmdo(\ab = \mathbf{1})] - \Eb[\yb \s \rmdo(\ab = \mathbf{0})] = 50 \times \sqrt{2 \pi} \times 0.02 = \sqrt{2\pi}
\end{equation}
where $\sqrt{2 \pi} \times 0.02$ comes from the Gaussian integral over $B$.

To produce approximate samples from this process, we sample a set of 2000 measurement points $\omega_1, \ldots, \omega_{2000}$ uniformly at random within the unit square. 
We estimate the value of each mean function at each point, e.g. $(B \ast \ub)_\omega$, by Monte Carlo integration over the unit square using these samples. 
We sample the noise using a multivariate normal with the reparameterization $\mu + A \ast \zb$, where $z$ denotes white noise and $A_\omega = \exp(-\frac{1}{2} (\frac{\|\omega\|_2}{0.01})^2)$, and evaluating $\mu$ and $\zb$ at the same set of measurement points.

\paragraph{Bayesian model.} We consider the Bayesian GCM model,
\begin{align}
	c^{ay} &\sim \mathrm{Normal}(0, 100)\\
	c^{xy} &\sim \mathrm{Normal}(0, 100)\\
	\tau &\sim \mathrm{Normal}(0, 10)\\
	\yb &\sim \GPc(c^{ay} B \ast \ab + c^{xy} B \ast \xb, \tau^2 \kappa),
\end{align}
where $\mathrm{Normal}(\mu, \sigma)$ is a Gaussian with mean $\mu$ and standard deviation $\sigma$.
Note we assumed the spatial length scales are known. The treatment effect is computed from posterior samples as $c^{ay} \times \sqrt{2\pi} \times 0.02$, where $0.02$ is the interference length scale.

As one comparison, we consider a geometric model that does not correct for confounding (GM) defined as,
\begin{align}
	c^{ay} &\sim \mathrm{Normal}(0, 100)\\
	\tau &\sim \mathrm{Normal}(0, 10)\\
	\yb &\sim \GPc(c^{ay} B \ast \ab, \tau^2 \kappa).
\end{align}
The treatment effect has the same expression as above.

As another comparison, we consider a conventional causal model (CM) that does not account for spatial interference or autocorrelation. 
\begin{align}
	c^{ay} &\sim \mathrm{Normal}(0, 100)\\
	c^{xy} &\sim \mathrm{Normal}(0, 100)\\
	\tau &\sim \mathrm{Normal}(0, 10)\\
	y_\omega &\sim \mathrm{Normal}(c^{ay} a_\omega + c^{xy} x_\omega, \tau).
\end{align}
The treatment effect is then $c^{ay}$.

\paragraph{Bayesian inference.} We implement the model in NumPyro \citep{Phan2019-br}. To approximate the GP, we use a multivariate normal likelihood evaluated at the measurement points, and approximate the mean by Monte Carlo integration, as in the data generating process. For numerical stability we add a small constant, $\epsilon=0.01$, to the diagonal of the covariance matrix and compute the model likelihood using the Cholesky decomposition.
We perform inference using the No-U Turn Sampler (NUTS) with a single chain and default parameters, using 100 steps for warmup and drawing 500 samples \citep{Phan2019-br,Bingham2019-aa,Hoffman2014-ic}. We initialize the sampler from an estimate of the maximum a posteriori parameter value, computed by optimizing the log likelihood with 40{,}000 steps of Adam at a step size of 0.00005. 

\paragraph{Simulation configuration.} We create 10 independent datasets by drawing from the data generating process. In each dataset, each variable $\xb,\ab,\yb$ is measured at 2000 points in the unit square. For each, we also created a smaller dataset where we subset the data by retaining only points in the domain $\omega_j \in (0, 0.5)$, i.e. points from a square of area $0.5^2=0.25$. 
We run inference on each of these 20 datasets and report the average error of the posterior mean treatment effect (\Cref{fig:spatial_gcm_avg_error}) and the average calibration over the 20 datasets, i.e. how often the true treatment effect fell in a 90\% credible interval (\Cref{fig:spatial_gcm_coverage}). The credible interval is taken to be between the 5th and 95th percentiles of the posterior.

\section{Array Simulation} \label{apx:array-sim}
\begin{figure}
	\centering
	\begin{subfigure}[t]{0.3\textwidth}
			\centering
			\includegraphics[width=\textwidth]{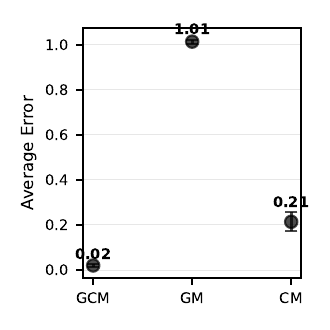}
			\caption{} \label{fig:array_gcm_avg_error}
		\end{subfigure}
		\begin{subfigure}[t]{0.3\textwidth}
			\centering
			\includegraphics[width=\textwidth]{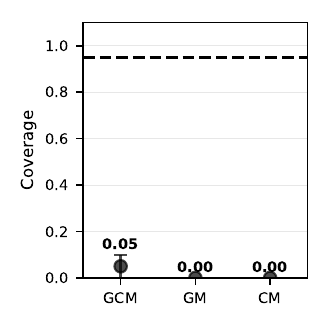}
			\caption{} \label{fig:array_gcm_coverage}
		\end{subfigure}	
		\caption{\textbf{Array GCM summary} (a) Average error and standard error of the mean across 20 simulations. (b) Fraction of simulations in which the credible interval (2.5th-97.5th) covered the true treatment effect, and standard error of the mean. Dashed line indicates expected coverage of 95\%.}
	\end{figure}

\paragraph{Data generation.} The data generating process is
\begin{align}
	u_{\omega} &= \epsilon^u_{\omega_1 \diamond} + \epsilon^u_{\diamond\omega_2} \quad\quad\quad\quad\quad\quad\quad\quad\quad\quad\quad\quad\quad \quad \quad\,\,\, \epsilon^u_{\omega_1 \diamond}, \epsilon^u_{\diamond\omega_2} \sim \mathrm{Normal}(0, 1)\\
	a_\omega &\sim \mathrm{Normal}(- u_\omega + 2 \epsilon^a_{\omega_1\diamond} + 2 \epsilon^a_{\diamond\omega_2}, 0.1) \,\,\,\,\quad\quad\quad \quad \quad \epsilon^a_{\omega_1 \diamond}, \epsilon^a_{\diamond\omega_2} \sim \mathrm{Normal}(0, 1)\\
	x_\omega &\sim \mathrm{Normal}(a_\omega + 2 \epsilon^x_{\omega_1\diamond} + 2 \epsilon^x_{\diamond\omega_2}, 0.1) \,\,\,\quad \quad \quad \quad \quad \quad \epsilon^x_{\omega_1 \diamond}, \epsilon^x_{\diamond\omega_2} \sim \mathrm{Normal}(0, 1)\\
	y_\omega &\sim \mathrm{Normal}(x_\omega + u_\omega + 2 |\epsilon^y_{\omega_1\diamond}| + 2 |\epsilon^y_{\diamond\omega_2}|, 0.1) \quad \quad \quad\epsilon^y_{\omega_1 \diamond}, \epsilon^y_{\diamond\omega_2} \sim \mathrm{Normal}(0, 1)
\end{align}
We draw a dataset with $\omega_1 \in \{1, \ldots, 200\}$ rows and $\omega_2 \in \{1, \ldots, 50\}$ columns.
The treatment effect for this model is,
\begin{equation}
	\Eb[\yb \s \rmdo(\ab = \mathbf{1})] - \Eb[\yb \s \rmdo(\ab = \mathbf{0})] = 1.
\end{equation}

\paragraph{Bayesian model.} We consider the Bayesian GCM model,
\begin{align}
	\gamma &\sim \mathrm{Normal}(0, 100)\\
	x_\omega &\sim \mathrm{Normal}(\gamma a_\omega + \epsilon_{\omega_1\diamond }^x + \epsilon_{\diamond\omega_2}^x, 0.1) \quad \quad \quad \quad \quad \quad \quad \quad \epsilon_{\omega_1\diamond }^x, \epsilon_{\diamond\omega_2}^x \sim \mathrm{Normal}(0, 100)\\
\beta^a &\sim \mathrm{Normal}(0, 100)\\
\beta^x &\sim \mathrm{Normal}(0, 100)\\
	y_\omega &\sim \mathrm{Normal}(\beta^{a} a_\omega + \beta^x x_\omega + \epsilon_{\omega_1\diamond}^y + \epsilon_{\diamond\omega_2}^y, 0.1) \quad \quad \quad \quad \epsilon_{\omega_1\diamond}^y, \epsilon_{\diamond\omega_2}^y \sim \mathrm{Normal}(0, 100)
\end{align}
where $\mathrm{Normal}(\mu, \sigma)$ is a Gaussian with mean $\mu$ and standard deviation $\sigma$.
The treatment effect is computed as $\beta^x \gamma$.

As one comparison, we consider a GCM where $\xb$ is assumed to be just another covariate, rather than accounting for the correct causal structure (GM). We use the model,
\begin{align}
	\beta^a &\sim \mathrm{Normal}(0, 100)\\
\beta^x &\sim \mathrm{Normal}(0, 100)\\
	y_\omega &\sim \mathrm{Normal}(\beta^{a} a_\omega + \beta^x x_\omega + \epsilon_{\omega_1\diamond}^y + \epsilon_{\diamond\omega_2}^y, 0.1) \quad \quad \quad \quad \epsilon_{\omega_1\diamond}^y, \epsilon_{\diamond\omega_2}^y \sim \mathrm{Normal}(0, 100),
\end{align}
which is the same as the model of $\yb$ in the standard Bayesian GCM above, but now instead we report the treatment effect as $\beta^a$.

As another comparison, we consider a conventional causal model that has the correct causal graph but does not account for correlation across rows or columns (CM).
\begin{align}
	\gamma &\sim \mathrm{Normal}(0, 100)\\
	x_\omega &\sim \mathrm{Normal}(\gamma a_\omega, 0.1) \\
\beta^a &\sim \mathrm{Normal}(0, 100)\\
\beta^x &\sim \mathrm{Normal}(0, 100)\\
	y_\omega &\sim \mathrm{Normal}(\beta^{a} a_\omega + \beta^x x_\omega, 0.1) 
\end{align}
The treatment effect is computed as $\beta^x \gamma$.

\paragraph{Bayesian inference.} We implement the model in NumPyro \citep{Phan2019-br}. We perform stochastic variational inference using a diagonal Gaussian mean field approximation to the posterior, using NumPyro's \verb|AutoNormal| method to construct the variational family \citep{Kucukelbir2017-ir}. 
We optimize the variational approximation using Adam, taking 100{,}000 steps of size 0.0005 \citep{Kingma2015-ej}.
We use Monte Carlo to compute the posterior mean and credible intervals of the treatment effect $\beta^x \gamma$ for the GCM and CM models, drawing 10{,}000 samples from the approximate posterior over $\beta^x$ and $\gamma$.
For the GM model, we calculate analytically the posterior mean and credible interval of the treatment effect $\beta^a$ from the Gaussian variational approximation.   

\paragraph{Simulation configuration.} We create 10 independent datasets from the data generating process, each with 200 rows and 50 columns.
For each, we also create a smaller dataset where we subset the array to 100 rows and 25 columns.
We run inference on each of these 20 datasets and report the average error of the posterior mean treatment effect (\Cref{fig:array_gcm_avg_error}) and the average calibration over the 20 datasets, i.e. how often the true treatment effect fell in a 95\% credible interval (\Cref{fig:array_gcm_coverage}). The credible interval is taken to be between the 2.5th and 97.5th percentiles of the posterior.
	
\section{Robinson decomposition} \label{apx:robinson}

We derive the Robinson decomposition for the genomic GCM \citep{Nie2021-ae}.
Write the effect as a vector with an entry for each possible nucleotide variant, $\tau(\ab_{-0})_a = \textsc{CATE}_{\wt \to a}(\ab_{-0})$, and similarly represent the treatment $a_0$ as a vector, a one-hot encoding of the nucleotide. Then,
\begin{align}
	y_0 = m(\ab_{-0}) + &(a_0 - e(\ab_{-0})) \cdot \tau(\ab_{-0}) + \epsilon_0 \label{eqn:robinson}\\
	\mathrm{where\,\,  } & m(\ab_{-0}) \triangleq \Eb[Y_0 \mid \ab_{-0}]\\
	&  e(\ab_{-0})_a \triangleq \pr(a_0 = a \mid \ab_{-0})\\
	& \epsilon_0 \triangleq y_0 - (\Eb[Y_0 \mid a_0 = a_{\wt}, \ab_{-0}] + a_0 \cdot \tau(\ab_{-0}))
\end{align}
Here $m(\cdot)$ is the conditional mean outcome, $e(\cdot)$ is the treatment propensity, and $\epsilon_0$ is a random noise variable describing the residual. Crucially, this random noise has conditional mean zero, $\Eb[\epsilon_0 \mid \ab] = 0$. So, by fitting the model in \Cref{eqn:r-propensity,eqn:r-outcome} we can learn the CATE.

\section{Genomic Application}
Each neural network model consists of a linear convolution, a softplus nonlinearity, and a linear layer. 
The convolutional filter has a size of 3 nucleotides, and outputs a single feature. The linear layer has output dimension 1 for the outcome model $\mu_\theta$ and the conditional mean outcome $m_\phi$, and output dimension 3 for the treatment effect $\tau_\theta$. For the propensity model, the output dimension is 4 and we include a final softmax output to produce values on the simplex over nucleotides.

We parameterize reverse complement equivariant outcome models by conjoining, averaging model's output with its output for the reverse complement.
For the propensity model, which produces nucleotide-specific predictions, we instead average with the \textit{reverse complement} of the output applied to the window's reverse complement, since this corresponds to the propensity of the variant at the complementary DNA strand.
We did not enforce reverse complement equivariance on the treatment effect model $\tau_\theta$: the treatment model outputs predictions about the difference between two nucleotides, so enforcing reverse complement equivariance requires combining multiple outputs, potentially introducing additional regularization bias. Preliminary experiments showed this combination degraded performance.

We place $\mathrm{Laplace}(0, 1)$ priors on each dimension of the weight parameter of the convolutional filter and of the linear layer, and $\mathrm{Normal}(0, 100)$ priors on each dimension of the bias parameter in the filter and linear layer.
We use the same $\mathrm{Laplace}(0, 1)$ and $\mathrm{Normal}(0, 100)$ priors on the conventional linear model (CM). 
Each model describes the mean $y_\omega$ as normally distributed with standard deviation $\sigma=0.1$.

We perform stochastic variational inference using a diagonal Gaussian mean field approximation to the posterior, using NumPyro's \verb|AutoNormal| \citep{Phan2019-br,Kucukelbir2017-ir}. 
We optimize the variational approximation using Adam, taking 150{,}000 steps of size 0.005 \citep{Kingma2015-ej}.
We do not update the propensity model based on the outcome data $\yb$, using \verb|stop_gradient| to exclude gradient information during training, following prior work on Bayesian propensity models \citep{Zigler2016-ix,Jacob2017-hu,Hahn2018-uu}.

We use Monte Carlo to compute the posterior mean of the treatment effect, drawing 10 samples from the approximate posterior at every position $\omega$ in the sequence $\ab$.
We create 20 independent $\yb$ datasets from each data generating process (\Cref{eqn:genomic-test-f,eqn:genomic-test-f-2}), 10 with noise standard deviation $\sigma=0.1$ and 10 with $\sigma=0.3$.

\end{document}